\documentclass{article}

\usepackage[preprint]{neurips_2025}

\usepackage[utf8]{inputenc} 
\usepackage[T1]{fontenc}    
\usepackage{url}            
\usepackage{booktabs}       
\usepackage{amsfonts}       
\usepackage{nicefrac}       
\usepackage{microtype}      
\usepackage{xcolor}         
 \usepackage{amsmath} 

\usepackage{url}
\usepackage{amsthm}
\usepackage{amsmath}
\usepackage{graphicx}
\usepackage{booktabs}
\usepackage{xcolor}
\usepackage{wrapfig}

\definecolor{icmlblue}{rgb}{0.21,0.49,0.74}
\usepackage[pagebackref,breaklinks,colorlinks,citecolor=icmlblue]{hyperref}

\definecolor{electricindigo}{rgb}{0.44, 0.0, 1.0}
\usepackage{multirow}

\definecolor{deblue}{RGB}{11,132,147}
\definecolor{ocra}{RGB}{204, 119, 34}
\usepackage{enumitem}
\usepackage{tikz}
\newcommand{\fcircle}[2][red,fill=red]{\tikz[baseline=-0.5ex]\draw[#1,radius=#2] (0,0.03) circle ;}

\definecolor{deblue}{RGB}{11,132,147}
\definecolor{ocra}{RGB}{204, 119, 34}
\definecolor{electricindigo}{rgb}{0.44, 0.0, 1.0}
\usepackage[capitalize]{cleveref}
\crefname{section}{Sec.}{Secs.}
\Crefname{section}{Section}{Sections}
\Crefname{table}{Table}{Tables}
\crefname{table}{Tab.}{Tabs.}
\definecolor{indigo(web)}{rgb}{0.29, 0.0, 0.51}
\usepackage{multirow}
\usepackage{xcolor}
\definecolor{darkblue}{RGB}{40,40,85}
\definecolor{babyblue}{rgb}{0.54, 0.81, 0.94}
\definecolor{pearDark}{HTML}{2980B9}
\definecolor{pearDarker}{HTML}{1D2DEC}
\usepackage[capitalize]{cleveref}
\crefname{section}{Sec.}{Secs.}
\Crefname{section}{Section}{Sections}
\Crefname{table}{Table}{Tables}
\crefname{table}{Tab.}{Tabs.}
\usepackage{fontawesome5}
\usepackage{multirow}
\usepackage{colortbl}
\usepackage{tabulary}
\usepackage{etoolbox}
\usepackage{tikz}
\usepackage{pifont}
\usepackage{booktabs}
\usepackage{longtable}
\usepackage{array}
\usepackage{amsmath,amssymb}

\theoremstyle{plain}
\newtheorem{theorem}{Theorem}[section]

\newtheorem{lemma}[theorem]{Lemma}

\theoremstyle{definition}
\newtheorem{definition}[theorem]{Definition}

\theoremstyle{remark}

\title{CATO: Charted Attention for Neural PDE Operators}

\author{%
  Chun-Wun Cheng\\
    DAMTP\\ University of Cambridge \\
  \texttt{cwc56@cam.ac.uk} 
  \And
    Sifan Wang\\
    Institute for Foundations of Data Science\\ Yale University \\
  \texttt{sifan.wang@yale.edu} \\
  \And
    Carola-Bibiane~Sch\"onlieb\\
    DAMTP\\ University of Cambridge \\
  \texttt{cbs31@cam.ac.uk} \\
  \And
    Angelica I. Aviles-Rivero\thanks{Corresponding author.} \\
    Yau Mathematical Sciences Center\\ Tsinghua University\\
  \texttt{aviles-rivero@tsinghua.edu.cn} \\
}

\begin{document}

\maketitle

\begin{abstract}
Neural operators have emerged as powerful data-driven solvers for PDEs, offering substantial acceleration over classical numerical methods. However, existing transformer-based operators still face critical challenges when modeling PDEs on complex geometries: directly processing over massive mesh points is computationally expensive, while operating in raw discretization coordinates may obscure the intrinsic geometry where physical interactions are more naturally expressed. To address these limitations, we introduce the Charted Axial Transformer Operator (CATO), a geometry-adaptive and derivative-aware neural operator for PDEs on general geometries. Instead of applying attention directly in the physical coordinate system, CATO learns a continuous latent chart that maps mesh coordinates into a learned chart space, where chart-conditioned axial attention efficiently captures long-range dependencies with reduced computational cost. In addition, CATO introduces a derivative-aware physics loss for steady-state PDEs that jointly supervises solution values, mesh-consistent gradients, and an auxiliary flux-like field, improving physical fidelity and reducing oversmoothing. We further provide a theoretical approximation result showing that, under a favorable chart, charted axial attention can represent low-rank axial solution operators with controlled error, and that small chart perturbations induce bounded approximation degradation. CATO achieves the best performance across all evaluated datasets, yielding an average improvement of approximately 26.76\% over the strongest competing baselines while reducing the number of parameters by 81.98\%. These results highlight the effectiveness of learning geometry-adaptive charts and derivative-aware physical supervision for accurate and efficient PDE operator learning.


\end{abstract}

\addtocontents{toc}{\protect\setcounter{tocdepth}{-1}}
\section{Introduction}

Many real-world phenomena, including turbulence and atmospheric circulation, are governed by partial differential equations (PDEs) \cite{debnath2012linear}. Classical numerical methods, such as finite element and spectral methods \cite{solin2005partial,costa2004spectral}, can produce highly accurate solutions, but they are often computationally expensive and therefore poorly suited to real-time prediction or many-query scenarios. This computational bottleneck has motivated growing interest in data-driven alternatives. The increasing availability of high-fidelity simulation data, together with advances in deep learning, has enabled the development of learned surrogate solvers that trade a modest loss in accuracy for substantial gains in computational efficiency. Unlike classical solvers, which typically solve each new PDE instance from scratch, learned surrogates amortize computation costs across many related problem settings.

Neural operators \cite{lu2019deeponet,wen2022u,li2023geometry,wu2024transolver, bryutkin2024hamlet,cheng2025mamba,wang2025fourier, cheng2025pde} have emerged as a promising data-driven alternative by learning mappings between function spaces directly from data. They enable fast inference and generalization across resolutions and have been successfully applied to weather forecasting \cite{pathak2022fourcastnet,leinonen2024modulated}, medical imaging \cite{hadramy2026noir,jatyani2025coarse}, and scientific modeling \cite{herde2024poseidon,zhou2024unisolver}. Transformer-based approaches \cite{cao2021choose,liu2022ht,li2022transformer,hao2023gnot,xiao2023improved,wu2024transolver,zhou2026saot, wang2024cvit} have further improved the modeling of nonlocal interactions, but remain challenged by computational cost and the difficulty of capturing meaningful geometric structure on large meshes.
A central limitation of existing methods is that they often operate directly in discretization coordinates, which may be poorly aligned with the intrinsic geometry of the underlying physical process. Consequently, the operator can appear unnecessarily complex, making it more difficult to learn compact and efficient representations. 

\textit{We hypothesize that this coordinate mismatch is a central bottleneck in neural operator learning.} To address this, we learn a geometry-adaptive coordinate chart before applying attention, transforming the operator into a representation that is easier to approximate. Concretely, we propose the \textbf{Charted Axial Transformer Operator (CATO)}, which maps the physical domain into a continuous chart space and performs attention in this adapted geometry.
On grids and structured meshes, where chart coordinates provide an ordered factorization, CATO applies axial attention along coordinate directions with a lightweight local operator, capturing both long-range dependencies and local structure without incurring the cost of full attention. On unstructured point clouds, we instead use CATO-PC, a topology-aware variant that replaces axial attention with KNN-based local aggregation and global irregular attention.
In addition, CATO incorporates derivative-aware supervision for steady-state PDEs by jointly predicting the solution and a flux representation, improving physical fidelity and stability. More generally, many PDEs on curved domains admit a low-rank or separable structure when expressed in a coordinate system aligned with the physics. By learning this coordinate system end-to-end, CATO shifts the burden from the attention mechanism to a simple learned embedding. This is fundamentally different from prior work that either fixes the coordinate system or compresses tokens without reparameterizing the geometry.
Our contributions are summarized as follows:

\fcircle[fill=deblue]{2pt} We identify \emph{coordinate mismatch} as a fundamental bottleneck in neural operator learning, where models must simultaneously learn geometry and solution structure. We show that adapting the coordinate system can reduce the effective complexity of the operator.

\fcircle[fill=deblue]{2pt} We propose the \textbf{Charted Axial Transformer Operator (CATO)}, which learns a continuous coordinate chart $\Phi_{\mathrm{chart}}$ and applies axial attention in this space, transforming a general nonlocal operator into an approximately separable (axial low-rank) form that can be efficiently approximated.

\fcircle[fill=deblue]{2pt} We establish that CATO provably approximates charted axial low-rank operators with explicit error bounds yielding both approximation guarantees and stability to chart perturbations.

\fcircle[fill=deblue]{2pt} Across six PDE benchmarks, CATO achieves an average $26.76\%$ error reduction (up to $52.74\%$), while using $81.98\%$ fewer parameters and training up to $3.5\times$ faster than prior methods.

\section{Related Work}
\paragraph{Neural PDE Solvers.}
Classical numerical methods (finite difference, finite element, spectral) remain the gold standard for accuracy, but their computational cost prohibits real‑time and many‑query applications. Early deep learning approaches, such as Physics‑Informed Neural Networks (PINNs) \cite{raissi2019physics}, incorporate PDE residuals directly into the loss, enabling unsupervised training but often suffering from training instability and spectral bias.
Operator learning offers an alternative paradigm: learn a mapping between function spaces directly from paired data. DeepONet \cite{lu2019deeponet} first demonstrated this idea. FNO \cite{li2020fourier} introduced global convolution in the spectral domain, achieving resolution invariance. Subsequent works improved expressivity and efficiency: U‑FNO \cite{wen2022u} and U‑NO \cite{rahman2022u} added multi‑scale paths; Geo‑FNO \cite{li2023fourier} learned deformations to handle irregular geometries; GINO \cite{li2023geometry} extended to 3D point clouds; LSM \cite{wu2023solving} leveraged latent spectral representations; WMT \cite{gupta2021multiwavelet} used wavelet decompositions. Despite their success, most of these methods assume regular grids or rely on hand‑crafted deformations; none adapt the coordinate system dynamically for attention.

\paragraph{Transformer-Based Neural Operators.}
Due to the fact that self-attention can be viewed as a learnable nonlocal integral operator, transformers have been an essential stride into neural PDE solving. Specific techniques like the Galerkin Transformer \cite{cao2021choose}, which implemented kernels in a linear attention without softmax, and models such as HT-Net \cite{liu2022ht}, OFormer \cite{li2022transformer}, GNOT \cite{hao2023gnot}, ONO \cite{xiao2023improved}, and FactFormer \cite{li2023scalable} used hierarchical, linear, orthogonal, or factorized approaches to provide a better trade-off between accurate long-range interaction modeling while maintaining computational efficiency. These approaches showed that attention-based architectures can be successful in learning PDE solution operators. SAOT \cite{zhou2026saot} combines Fourier attention for global patterns with Wavelet attention for local, high-frequency details. Transolver \cite{wu2024transolver} uses discrete slices to form physical attention, while our method maps the original mesh into a continuous chart space with axial attention.

\textbf{Comparison with Existing Methods.} CATO differs fundamentally from the above methods. While Transolver compresses physical tokens, it still operates in raw coordinates; SAOT mixes Fourier and wavelet attention but does not reparameterize geometry; OFormer and GNOT rely on fixed positional encodings. CATO instead learns a continuous geometry chart \(\Phi_{\mathrm{chart}}\) and applies axial attention in that adapted space – reducing complexity to \(O(HW(H+W))\) and aligning attention with the PDE’s natural low‑rank structure. Additionally, CATO introduces a derivative‑aware loss that supervises both solution values and a gradient‑like flux, improving sharpness on distorted meshes – a feature absent in all prior transformer‑based operators. We provide theoretical guarantees that learning a chart reduces the effective operator complexity and that small chart errors cause only linear degradation.

\section{Methodology}

\paragraph{Problem statement.}
In neural operator learning, we consider operator approximation on a two-dimensional structured mesh of resolution $H \times W$, with $N = HW$ nodes. For each sample, let
$\mathbf{X} = \{\mathbf{x}_{ij}\}_{i=1,j=1}^{H,W} \in \mathbb{R}^2$,
denote the physical coordinates of the mesh nodes, and let
$\mathbf{F} = \{\mathbf{f}_{ij}\}_{i,j=1}^{H,W} \in \mathbb{R}^{d_f}$,
denote optional node-wise auxiliary inputs, such as coefficients, source terms, or other field descriptors. The objective is to learn a solution operator
$\mathcal{G}_\theta : (\mathbf{X}, \mathbf{F}) \mapsto \mathbf{u}$,
where the target scalar field is given by $\mathbf{u} = \{u_{ij}\} \in \mathbb{R}^{H \times W}$. The model predicts the scalar solution field
$\hat{\mathbf{u}} \in \mathbb{R}^{B \times N \times 1}$.
During training, it also produces an auxiliary vector field
$\hat{\mathbf{q}} \in \mathbb{R}^{B \times N \times 2}$,
which is supervised using the spatial gradient of the target field. Thus, this auxiliary head can be interpreted as a gradient-like flux proxy.

\begin{figure}[t!]
    \centering
    \includegraphics[width=1\linewidth]{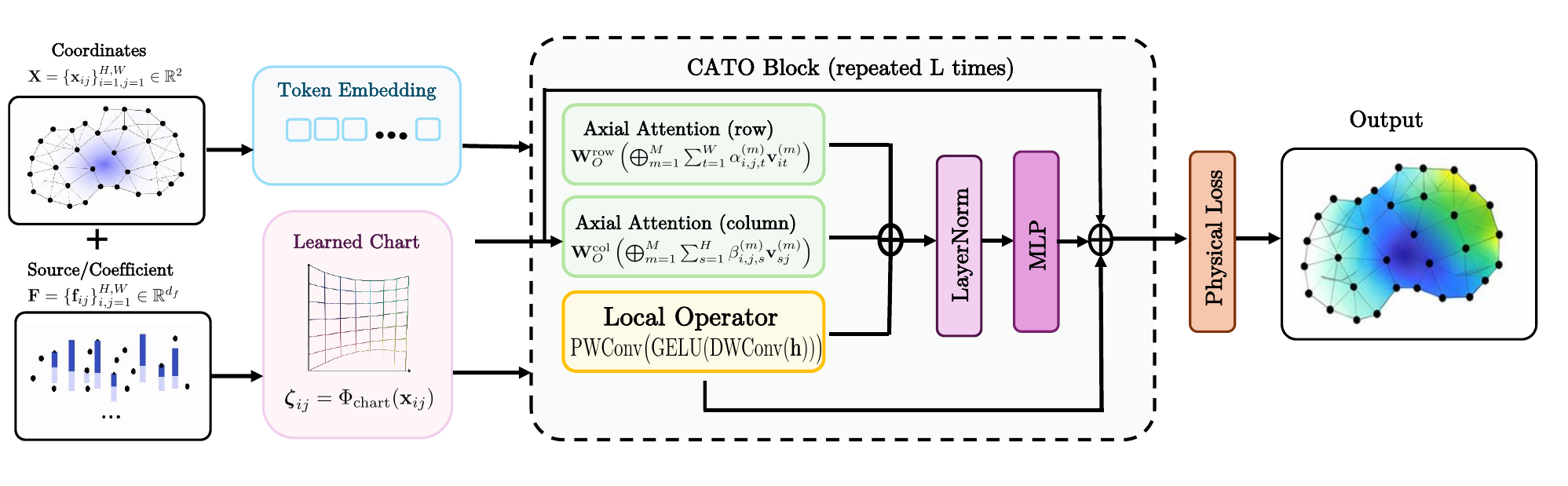}
    \caption{CATO architecture overview. Coordinates and source features are embedded with a learned chart, processed by repeated CATO blocks combining axial attention and local operators, and trained with a physics-informed loss to predict the output field. }
    \label{teaser}
\end{figure}

\subsection{Charted Axial Transformer Operator (CATO) Block}
\label{CATO block}
For each node, the physical coordinate and optional auxiliary features are concatenated: $\mathbf{z}_{ij}^{\mathrm{in}} =
[\mathbf{x}_{ij}, \mathbf{f}_{ij}] \ \text{if } d_f > 0,\;
\mathbf{x}_{ij} \ \text{otherwise}$. These inputs are lifted into a higher latent space of dimension $C$ by a two-layer MLP:
$\mathbf{h}_{ij}^{(0)} 
= \Phi_{\mathrm{pre}}(\mathbf{z}_{ij}^{\mathrm{in}})
= \mathbf{W}_2 \, \sigma(\mathbf{W}_1 \mathbf{z}_{ij}^{\mathrm{in}} + \mathbf{b}_1) + \mathbf{b}_2$,. The initial hidden representation can be written as: $\mathbf{H}^{(0)} \in \mathbb{R}^{B \times H \times W \times C}$.

\paragraph{Learnable geometry chart.} The physical grid is typically constructed as a discrete representation of the computational domain, rather than being induced by the PDE itself. Its primary role is to encode the domain geometry and boundary structure, not necessarily the intrinsic coordinate system in which the solution operator is most naturally expressed. Consequently, the raw Cartesian coordinates $(x,y)$ may be poorly aligned with the dominant directions of variation in the solution, particularly on curved or non-uniform meshes. They may also encode redundant geometric information, entangle relevant and irrelevant directions for attention, and force the model to compensate for mesh distortion before learning the underlying operator. This motivates performing attention in a learned, geometry-adapted coordinate system, rather than assuming that the physical mesh coordinates are aligned with the intrinsic geometry of the solution operator.

We introduce a learned chart that maps each physical coordinate to a continuous latent 2D chart space. 
$\boldsymbol{\zeta}_{ij} = (\xi_{ij}, \eta_{ij}) = \Phi_{\mathrm{chart}}(\mathbf{x}_{ij}), \text{with}$
$\Phi_{\mathrm{chart}}(\mathbf{x})
=
\tanh\!\left(
\mathbf{V}_2 \, \mathrm{SiLU}(\mathbf{V}_1 \mathbf{x} + \mathbf{c}_1) + \mathbf{c}_2
\right)$.
Hence, $(\xi_{ij}, \eta_{ij}) \in [-1,1]^2.$ $\xi_{ij}$ is used for row attention while $\eta_{ij}$ is used for column attention. We do not require $\Phi_{\mathrm{chart}}$ to be globally invertible; instead, it is used as a learned continuous coordinate system for positional encoding and attention.

\paragraph{Continuous rotary positional encoding (RoPE)} Discrete positional encoding only encodes token features while neglecting the relative distance of tokens. However, closer points in PDEs will have a stronger influence, indicating that relative distances are an important factor in solving PDEs. To mitigate this limitation, we use continuous RoPE, which not only retains token information but also preserves relative distance relationships. The axial attention layers use continuous RoPE, where the positional variable is not a discrete token index but a real-valued chart coordinate. 

For each head dimension pair
$r = 0,1,\dots,\frac{d_h}{2}-1$,
 define the angular frequency $\omega_r = \theta^{-2r/d_h}$,
where $\theta > 0$ is the RoPE base parameter.  The rotary transform matrix is defined as:
\[
R_r(p)
\begin{bmatrix}
z_{2r} \\
z_{2r+1}
\end{bmatrix}
=
\begin{bmatrix}
\cos(\omega_r p) & -\sin(\omega_r p) \\
\sin(\omega_r p) & \cos(\omega_r p)
\end{bmatrix}
\begin{bmatrix}
z_{2r} \\
z_{2r+1}
\end{bmatrix}.
\]
Applying this over all channel pairs gives $\widetilde{\mathbf{q}} = R(p)\mathbf{q},
\qquad
\widetilde{\mathbf{k}} = R(p)\mathbf{k}$. Then we can define the attention score as: $\left( R(p_i) q_i \right)^T \left( R(p_j) k_j \right) = q_i^T R(p_j - p_i) k_j$ which contains both token feature and relative distance features. In addition, the input $p$ is a coordinate, which is a continuous input of the position. Continuous position functions impose a smooth geometric structure on attention. Nearby positions change by small rotations, which often matches the real structure of sequences better than a purely index-based view.

\paragraph{Charted axial self-attention.}
After obtaining the learned chart, we apply multi-head self-attention separately along the row and column directions. The row-wise and column-wise attention outputs are then summed to form the final axial attention representation.

Specifically, let the hidden representation at a node be $\mathbf{h}_{ij} \in \mathbb{R}^C$. Queries, keys, and values are computed as $\mathbf{q}_{ij} = \mathbf{W}_Q \mathbf{h}_{ij}, \qquad
\mathbf{k}_{ij} = \mathbf{W}_K \mathbf{h}_{ij}, \qquad
\mathbf{v}_{ij} = \mathbf{W}_V \mathbf{h}_{ij}$.
With $M$ attention heads and head dimension $d_h = C/M$, these are split as $\mathbf{q}_{ij}^{(m)},\ \mathbf{k}_{ij}^{(m)},\ \mathbf{v}_{ij}^{(m)} \in \mathbb{R}^{d_h},
\qquad m = 1,\dots,M$.

We first compute the row attention.
For a fixed row $i$, the tokens $\{\mathbf{h}_{ij}\}_{j=1}^{W}$ form a 1D sequence. The horizontal chart coordinate $\xi_{ij}$ is used in RoPE: $\widetilde{\mathbf{q}}_{ij}^{(m)} = R(\xi_{ij})\mathbf{q}_{ij}^{(m)},
\qquad
\widetilde{\mathbf{k}}_{ij}^{(m)} = R(\xi_{ij})\mathbf{k}_{ij}^{(m)}$.
The row-attention can be computed as:
$\mathrm{Attn}_{\mathrm{row}}(\mathbf{h})_{ij}
=
\mathbf{W}_O^{\mathrm{row}}
\left(
\bigoplus_{m=1}^{M}
\sum_{t=1}^{W}
\alpha_{i,j,t}^{(m)} \mathbf{v}_{it}^{(m)}
\right)$, where $\alpha_{i,j,t}^{(m)}$ is the attention weight that is computed by softmax.
Similarly, we compute the column attention output as:
$\mathrm{Attn}_{\mathrm{col}}(\mathbf{h})_{ij}
=
\mathbf{W}_O^{\mathrm{col}}
\left(
\bigoplus_{m=1}^{M}
\sum_{s=1}^{H}
\beta_{i,j,s}^{(m)} \mathbf{v}_{sj}^{(m)}
\right)$,
where $\beta_{i,j,s}^{(m)}$ are the corresponding softmax-normalized column-attention weights. The final output is the sum of row and column outputs: $\mathcal{A}(\mathbf{h}, \boldsymbol{\zeta})
=
\mathrm{Attn}_{\mathrm{row}}(\mathbf{h}; \xi)
+
\mathrm{Attn}_{\mathrm{col}}(\mathbf{h}; \eta)$.

To complement the nonlocal attention, we further introduce a local depthwise operator:
$\mathcal{L}(\mathbf{h})
=
\mathrm{PWConv}
\bigl(
\mathrm{GELU}(
\mathrm{DWConv}(\mathbf{h})
)
\bigr)$,
where $\mathrm{DWConv}$ denotes a depthwise $k \times k$ convolution and $\mathrm{PWConv}$ a $1 \times 1$ pointwise convolution. It acts as a learned local stencil operator.

We now define the CATO block as follows. Given hidden state $\mathbf{H}^{(\ell)}$, we compute
\begin{equation}
\widetilde{\mathbf{H}}^{(\ell)}
=
\mathbf{H}^{(\ell)}
+
\mathcal{A}\!\left(
\mathrm{LN}(\mathbf{H}^{(\ell)}), \boldsymbol{\zeta}
\right)
+
\mathcal{L}\!\left(
\mathrm{LN}(\mathbf{H}^{(\ell)})
\right).
\end{equation}

A second residual update is then applied: $\mathbf{H}^{(\ell+1)}
=
\widetilde{\mathbf{H}}^{(\ell)}
+
\mathrm{MLP}\!\left(
\mathrm{LN}(\widetilde{\mathbf{H}}^{(\ell)})
\right)$, where MLP denotes a feed-forward network. We then stack L blocks. After $L$ CATO blocks, a final layer normalization is applied: $\mathbf{H}^{(L)} \leftarrow \mathrm{LN}(\mathbf{H}^{(L)})$.

The final latent state is mapped to two outputs. The scalar solution prediction is
$\hat{u}_{ij} = \mathbf{w}_u^\top \mathbf{h}_{ij}^{(L)} + b_u$.
The auxiliary vector output is
$\hat{\mathbf{q}}_{ij}
=
\mathbf{W}_q \mathbf{h}_{ij}^{(L)} + \mathbf{b}_q,
\qquad
\hat{\mathbf{q}}_{ij} \in \mathbb{R}^2$.
Therefore, the model predicts both a scalar field $\hat{u}$ and a gradient-like flux field $\hat{\mathbf{q}}$.

For inputs without a canonical grid structure (e.g., point clouds), the row–column factorisation required by axial attention is not defined. In this setting, we retain the learned chart as the core representation, but replace axial attention with a geometry-aware attention operator defined on local neighborhoods. This results in a point-cloud variant (CATO-PC) that preserves the chart-based formulation while adapting the interaction mechanism to the input topology.
\subsection{Physical Loss}
Instead of predicting only $u$ (pressure or scalar field), we also predict a gradient proxy as an auxiliary output. This tends to improve sharp features, reduce oversmoothing, and stabilize learning when data is limited.

We construct it as follows. Let the coordinate at node $(i,j)$ be $\mathbf{x}_{ij} = (x_{ij}, y_{ij})$.
Define centered differences
$\Delta_i u_{ij} = u_{i+1,j} - u_{i-1,j},
\qquad
\Delta_j u_{ij} = u_{i,j+1} - u_{i,j-1}$,
and $\Delta_i \mathbf{x}_{ij}
=
\mathbf{x}_{i+1,j} - \mathbf{x}_{i-1,j},
\qquad
\Delta_j \mathbf{x}_{ij}
=
\mathbf{x}_{i,j+1} - \mathbf{x}_{i,j-1}$.
Let $\Delta_i \mathbf{x}_{ij} = (a,b),
\qquad
\Delta_j \mathbf{x}_{ij} = (c,d)$,
and rewrite it in linear-system form:
\[
\begin{bmatrix}
\Delta_i u_{ij} \\
\Delta_j u_{ij}
\end{bmatrix}
\approx
\begin{bmatrix}
a & b \\
c & d
\end{bmatrix}
\begin{bmatrix}
u_x \\
u_y
\end{bmatrix}_{ij}.
\]
We can obtain the solution by solving the linear system and we get:
$u_x
=
\frac{
\Delta_i u_{ij}\, d - \Delta_j u_{ij}\, b
}{
ad-bc
},
\qquad
u_y
=
\frac{
-\Delta_i u_{ij}\, c + \Delta_j u_{ij}\, a
}{
ad-bc
}$.
This gives the discrete gradient approximation $(u_x, u_y)$. $|ad - bc| > 0$ ensures the system is non-singular; otherwise, the local mesh directions are linearly dependent and the gradient is not uniquely defined. Physical supervision enforces consistency in both function values and spatial derivatives, leading to improved fidelity of local structures and reduced smoothing bias.

\paragraph{Training objective.}

The training loss combines value accuracy, gradient matching, auxiliary flux supervision, and consistency between the flux head and the gradient implied by the predicted scalar field.
The total loss is defined as follows:
$\mathcal{L}
=
\mathcal{L}_{\mathrm{val}}
+
\lambda_g \mathcal{L}_{\mathrm{grad}}
+
\lambda_f \mathcal{L}_{\mathrm{flux}}
+
\lambda_c \mathcal{L}_{\mathrm{cons}}$,
where \(\lambda_g\), \(\lambda_f\), and \(\lambda_c\) control the relative contributions of the gradient, flux, and consistency terms.

First, the value loss measures the relative $L^2$ error between the predicted and reference scalar fields:
$\mathcal{L}_{\mathrm{val}}
=
\frac{1}{B}
\sum_{b=1}^{B}
\frac{
\left\|
\hat{\mathbf{u}}^{(b)}-\mathbf{u}^{(b)}
\right\|_2
}{
\left\|
\mathbf{u}^{(b)}
\right\|_2+\varepsilon
}$, where $B$ is the batch size and $\varepsilon>0$ ensures numerical stability.

To incorporate derivative information, we reconstruct gradients on the structured mesh as
$\nabla \mathbf{u}=\mathrm{Grad}(\mathbf{u},\mathbf{X}),
\qquad
\nabla \hat{\mathbf{u}}=\mathrm{Grad}(\hat{\mathbf{u}},\mathbf{X})$,
where $\mathbf{X}$ denotes the mesh coordinates. The gradient-matching loss is then defined as
$\mathcal{L}_{\mathrm{grad}}
=
\frac{1}{BN}
\sum_{b=1}^{B}\sum_{n=1}^{N}
\left\|
\nabla \hat{\mathbf{u}}_{b,n}
-
\nabla \mathbf{u}_{b,n}
\right\|_2^2$.

The auxiliary vector head $\hat{\mathbf{q}}$ is directly supervised by the target gradient through the flux loss:
$\mathcal{L}_{\mathrm{flux}}
=
\frac{1}{BN}
\sum_{b=1}^{B}\sum_{n=1}^{N}
\left\|
\hat{\mathbf{q}}_{b,n}
-
\nabla \mathbf{u}_{b,n}
\right\|_2^2$ .
To enforce compatibility between the scalar and auxiliary outputs, we further introduce the consistency loss:
$\mathcal{L}_{\mathrm{cons}}
=
\frac{1}{BN}
\sum_{b=1}^{B}\sum_{n=1}^{N}
\left\|
\hat{\mathbf{q}}_{b,n}
-
\nabla \hat{\mathbf{u}}_{b,n}
\right\|_2^2$ .
Together, these objectives provide field-level, derivative-level, and consistency supervision, promoting accurate and spatially coherent predictions.
\paragraph{Overall design.} As show in figure \ref{teaser}, the overall architecture of CATO is designed as a geometry-adaptive neural operator for solving PDEs on general domains. The model first embeds the input mesh coordinates and optional physical features into a latent representation. A learned chart module then maps the original physical coordinates into a continuous chart space, where stacked CATO blocks apply axial self-attention to efficiently capture long-range dependencies. Each block also includes a lightweight local operator to model nearby spatial interactions. Finally, the processed representation is decoded into the target solution field and an auxiliary gradient-like flux field, improving both prediction accuracy and physical consistency.
\subsection{Theoretical underpinning}
Why should learning a geometry chart help? A raw Cartesian grid often does not align with the intrinsic directions of a PDE solution—for example, flow along a curved pipe or around an airfoil. In such cases, the solution operator may be approximately separable along coordinate directions when expressed in a suitable coordinate system, yet appear complex in the original $(x,y)$ coordinates.
CATO’s core hypothesis is that, by learning a coordinate chart $\zeta = \Phi_{\mathrm{chart}}(x)$ and applying axial attention in this chart space, the operator can be transformed into a representation that is significantly easier to approximate.
We now formalise this intuition. For the theoretical analysis, we consider a  CATO block with setting: dropout is set to zero, LayerNorm is replaced by the identity, and the local depthwise branch is deactivated. For clarity, we state the results for a scalar input field $f \in \mathbb{R}^{H \times W}$; the extension to vector-valued fields follows analogously.

Given a chart
$\zeta_{ij}=(\xi_{ij},\eta_{ij})=\Phi_{\mathrm{chart}}(x_{ij})\in K\subset[-1,1]^2$,
and a  one-block of CATO acts as
$H^{(0)}_{ij}=\Phi_{\mathrm{pre}}(x_{ij},f_{ij})\in\mathbb{R}^C$,
$\widetilde H = H^{(0)} + A(H^{(0)},\zeta),
\qquad
H^{(1)} = \widetilde H + \mathrm{MLP}(\widetilde H)$,
followed by a linear readout $\mathcal N_\Theta(f,X)_{ij}=w_{\mathrm{out}}^\top H^{(1)}_{ij}+b_{\mathrm{out}}$. Then we have the following definition and lemma.

\begin{definition}[Charted axial low-rank operator]
\label{def}
Let $B_M:=\{f\in\mathbb{R}^{H\times W}:\|f\|_2\le M\}$.
We say that an operator
$\widetilde{\mathcal G}_\Phi:B_M\to\mathbb{R}^{H\times W}$
is \emph{$(R_\xi,R_\eta,\varepsilon_{\mathrm{rk}})$-charted axial low-rank}
(with respect to the chart $\zeta$) if there exist continuous functions $a_r,b_r,c_s,d_s,\ell:K\to\mathbb{R},
\qquad
r=1,\dots,R_\xi,\quad s=1,\dots,R_\eta$,
and an operator $\mathcal R$ such that
$\widetilde{\mathcal G}_\Phi=\mathcal T_\zeta+\mathcal R$,
where$(\mathcal T_\zeta f)_{ij}
=
\sum_{r=1}^{R_\xi}
a_r(\zeta_{ij})
\Big(\frac1W\sum_{t=1}^W b_r(\zeta_{it})f_{it}\Big)
+
\sum_{s=1}^{R_\eta}
c_s(\zeta_{ij})
\Big(\frac1H\sum_{p=1}^H d_s(\zeta_{pj})f_{pj}\Big)
+
\ell(\zeta_{ij})f_{ij}$,
and
$\|\mathcal R f\|_2\le \varepsilon_{\mathrm{rk}}\|f\|_2
\qquad
\text{for all } f\in B_M$.
\end{definition}
\begin{lemma}[Neural realization of charted axial finite-rank operators]
\label{lem:axial-realization}
Let $\mathcal T_\zeta:B_M\to\mathbb{R}^{H\times W}$ be given by
$(\mathcal T_\zeta f)_{ij}
=
\sum_{r=1}^{R_\xi}
a_r(\zeta_{ij})
\Big(\frac1W\sum_{t=1}^W b_r(\zeta_{it})f_{it}\Big)
+
\sum_{s=1}^{R_\eta}
c_s(\zeta_{ij})
\Big(\frac1H\sum_{p=1}^H d_s(\zeta_{pj})f_{pj}\Big)
+
\ell(\zeta_{ij})f_{ij}$,
where $a_r,b_r,c_s,d_s,\ell$ are continuous on $K$.
Then for every $\varepsilon_{\mathrm{nn}}>0$, there exist a hidden width $C$
and parameters of a one-block core CATO with $R_\xi$ row heads and $R_\eta$
column heads such that $\sup_{f\in B_M}\|\mathcal N_\Theta(f,X)-\mathcal T_\zeta f\|_2
\le
\varepsilon_{\mathrm{nn}}$.
\end{lemma}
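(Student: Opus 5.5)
The plan is an explicit construction that assigns separate channels of the hidden state to separate quantities. The softmax is neutralised by making all attention scores constant, so each head computes a plain row or column average. All nonlinear work is pushed into $\Phi_{\mathrm{pre}}$ and the block MLP, which are handled by the universal approximation theorem on compact sets. I assume the mesh $X$, and hence the chart values $\zeta_{ij}=\Phi_{\mathrm{chart}}(x_{ij})$, are fixed, so the supremum is over $f\in B_M$ only. Then each input $(x_{ij},f_{ij})$ lies in the compact set $\{x_{ij}\}\times[-M,M]$, since $|f_{ij}|\le\|f\|_2\le M$. Because $\mathbb{R}^{H\times W}$ is finite dimensional, it suffices to make every entry error at most $\varepsilon_{\mathrm{nn}}/\sqrt{HW}$.

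\textbf{Step 1 (lifting).} Split the hidden width into blocks of channels:
\begin{itemize}
\item $R_\xi$ channels carrying $b_r(\zeta_{ij})f_{ij}$ and $R_\eta$ channels carrying $d_s(\zeta_{ij})f_{ij}$ (these feed the values);
\item $R_\xi+R_\eta$ channels carrying $a_r(\zeta_{ij})$ and $c_s(\zeta_{ij})$;
\item one channel carrying $\ell(\zeta_{ij})f_{ij}$;
\item $R_\xi+R_\eta$ initially zero ``slots'' that will receive the attention outputs.
\end{itemize}
Each target is a continuous function of $(x,f)$, because it is a composition of continuous functions with the continuous map $\Phi_{\mathrm{chart}}$. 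So a wide enough two-layer MLP $\Phi_{\mathrm{pre}}$ approximates all of them uniformly within $\delta$ on the compact input set, and keeps the slot channels at exactly zero.

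\textbf{Step 2 (attention).} For each row head $r$, set $W_Q=W_K=0$ on that head. The RoPE rotation then acts on zero vectors, every score is $0$, and the softmax weights are exactly $\alpha_{i,j,t}=1/W$. Let $W_V$ select the channel $b_r f$, and let $W_O^{\mathrm{row}}$ write the head output into the $r$-th slot. Column heads are treated the same way, giving weights $1/H$. Because $H^{(0)}$ is zero in the slots, the residual gives $\widetilde H_{ij}$ containing approximations of $m_{r,i}=\frac1W\sum_t b_r(\zeta_{it})f_{it}$ and $n_{s,j}=\frac1H\sum_p d_s(\zeta_{pj})f_{pj}$, each within $\delta$. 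Averaging does not amplify error. All other channels pass through unchanged.

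\textbf{Step 3 (MLP and readout).} All channels of $\widetilde H$ stay in a bounded box, since $|m_{r,i}|\le \|b_r\|_\infty M$ and similar bounds hold for the other channels. The target
\[
\sum_r a_r m_r+\sum_s c_s n_s+\ell f
\]
is a continuous polynomial of these channels. So an MLP can approximate $\sum_r a_r m_r+\sum_s c_s n_s$ within $\delta'$ on the box, write it into a fresh channel, and leave the others untouched. The linear readout $w_{\mathrm{out}}$ then sums this channel and the $\ell f$ channel from the residual.

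\textbf{Error bookkeeping.} The products are uniformly continuous, with Lipschitz constants controlled by bounds depending on $M$ and $\sup_K$ of $a_r,b_r,c_s,d_s,\ell$. Hence the per-entry error is at most $C_1\delta+\delta'$. Choosing $\delta$ and $\delta'$ small and summing over the $HW$ entries gives the claimed $\ell^2$ bound.

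The main obstacle I expect is the softmax, which cannot produce signed weights like $b_r(\zeta_{it})$ in the attention weights themselves. The fix is to put the weighting into the values through $\Phi_{\mathrm{pre}}$, so the attention only needs the uniform weights produced by zero scores. A secondary point is propagating the first-layer error through the product approximation, which requires the box bounds above to be fixed before $\delta'$ is chosen.
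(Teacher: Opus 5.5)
Your proposal is correct and follows essentially the same construction as the paper. Both arguments use reserved channels filled by $\Phi_{\mathrm{pre}}$ via universal approximation, zero query/key projections (so RoPE acts on zero vectors and the row/column softmax weights are exactly uniform), heads that average the value channels $b_r(\zeta)f$ and $d_s(\zeta)f$ into zero-initialized summary slots, and a block MLP that approximates the final polynomial combination on a bounded box to per-entry accuracy $\varepsilon_{\mathrm{nn}}/\sqrt{HW}$; your only cosmetic departures are computing $\ell(\zeta_{ij})f_{ij}$ directly in the lifting and adding it in the linear readout (the paper carries $\ell(\zeta_{ij})$ and $f_{ij}$ separately and lets the MLP form $\lambda z$) and using Lipschitz bounds on the box where the paper invokes uniform continuity of $F$ on a margin-padded compact set.
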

This result shows that the CATO block can approximate any finite-rank charted axial operator to arbitrary accuracy. The proof constructs row and column attention heads that perform the required directional averaging operations across the chart coordinates.
\begin{lemma}[Lipschitz stability with respect to chart perturbations]
\label{lem:chart-stability}
Let $\mathcal T_\zeta$ be as in Lemma~\ref{lem:axial-realization}, and assume
in addition that the coefficient functions are bounded and Lipschitz:
$\|a_r\|_\infty\le A_r,\quad
\|b_r\|_\infty\le B_r,\quad
\|c_s\|_\infty\le C_s,\quad
\|d_s\|_\infty\le D_s,\quad
\|\ell\|_\infty\le L_0$,
and
$\operatorname{Lip}(a_r)\le L_{a_r},\quad
\operatorname{Lip}(b_r)\le L_{b_r},\quad
\operatorname{Lip}(c_s)\le L_{c_s},\quad
\operatorname{Lip}(d_s)\le L_{d_s},\quad
\operatorname{Lip}(\ell)\le L_\ell$.
Let another chart $\widehat\zeta_{ij}\in K$ satisfy
$\max_{i,j}\|\widehat\zeta_{ij}-\zeta_{ij}\|\le \delta$.
Define $\mathcal T_{\widehat\zeta}$ by replacing $\zeta$ with $\widehat\zeta$
in the formula for $\mathcal T_\zeta$. Then, for every $f\in B_M$,
$\|\mathcal T_{\widehat\zeta}f-\mathcal T_\zeta f\|_2
\le
C_{\mathrm{chart}}\delta\|f\|_2$,
where
$C_{\mathrm{chart}}
=
\sum_{r=1}^{R_\xi}(L_{a_r}B_r+A_rL_{b_r})
+
\sum_{s=1}^{R_\eta}(L_{c_s}D_s+C_sL_{d_s})
+
L_\ell$.
In particular,
$\sup_{f\in B_M}\|\mathcal T_{\widehat\zeta}f-\mathcal T_\zeta f\|_2
\le
C_{\mathrm{chart}}M\delta$.
\end{lemma}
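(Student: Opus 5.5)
The plan is a direct term-by-term perturbation estimate. It uses only the triangle inequality, the Cauchy--Schwarz inequality, and one elementary operator-norm bound for the row/column averaging maps.

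\emph{Step 1 (an auxiliary norm bound).} For arrays $\alpha,\beta\in\mathbb{R}^{H\times W}$ with $|\alpha_{ij}|\le \bar\alpha$ and $|\beta_{ij}|\le\bar\beta$, define the row operator
\[
(P_{\alpha,\beta}f)_{ij}=\alpha_{ij}\,\frac1W\sum_{t=1}^W\beta_{it}f_{it}.
\]
I will show $\|P_{\alpha,\beta}f\|_2\le\bar\alpha\bar\beta\|f\|_2$. Write $g_i=\frac1W\sum_t\beta_{it}f_{it}$. Cauchy--Schwarz gives
\[
g_i^2\le \frac{1}{W^2}\Big(\sum_t\beta_{it}^2\Big)\Big(\sum_t f_{it}^2\Big)\le \frac{\bar\beta^2}{W}\sum_t f_{it}^2 .
\]
Hence
\[
\|P_{\alpha,\beta}f\|_2^2=\sum_{i,j}\alpha_{ij}^2g_i^2\le W\bar\alpha^2\sum_i g_i^2\le\bar\alpha^2\bar\beta^2\|f\|_2^2 .
\]
The same argument, with the roles of rows and columns exchanged and $1/H$ in place of $1/W$, gives the identical bound for the column operator. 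The point is that the normalisations $1/W$ and $1/H$ exactly cancel the multiplicity of each average, so no dimension factors appear.

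\emph{Step 2 (splitting each rank-one term).} For a fixed row term $r$, set $\hat a=a_r(\widehat\zeta)$, $a=a_r(\zeta)$, and similarly for $b$. I will use the product decomposition
\[
\hat a\,\langle \hat b f\rangle-a\,\langle bf\rangle=(\hat a-a)\,\langle\hat b f\rangle+a\,\langle(\hat b-b)f\rangle ,
\]
where $\langle\cdot\rangle$ denotes the row average. Each piece is an operator of the form $P_{\alpha,\beta}$ from Step 1. In the first piece, $|\hat a-a|\le L_{a_r}\delta$ by the Lipschitz assumption and $\max_{i,j}\|\widehat\zeta_{ij}-\zeta_{ij}\|\le\delta$, while $|\hat b|\le B_r$ because $\widehat\zeta_{ij}\in K$. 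This is the place where the hypothesis that the perturbed chart stays in $K$ is used. In the second piece, $|a|\le A_r$ and $|\hat b-b|\le L_{b_r}\delta$. Step 1 then bounds the $r$-th row contribution by $(L_{a_r}B_r+A_rL_{b_r})\delta\|f\|_2$. The column terms are handled identically and give $(L_{c_s}D_s+C_sL_{d_s})\delta\|f\|_2$.

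\emph{Step 3 (the local term and assembly).} The local term satisfies, pointwise,
\[
|(\ell(\widehat\zeta_{ij})-\ell(\zeta_{ij}))f_{ij}|\le L_\ell\delta|f_{ij}|,
\]
so its $\ell^2$ norm is at most $L_\ell\delta\|f\|_2$. Summing all contributions with the triangle inequality yields $\|\mathcal T_{\widehat\zeta}f-\mathcal T_\zeta f\|_2\le C_{\mathrm{chart}}\delta\|f\|_2$. Taking the supremum over $\|f\|_2\le M$ gives the final claim $C_{\mathrm{chart}}M\delta$.

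The only step needing care is Step 1: bounding the averaging operator in $\ell^2$ without incurring a factor $\sqrt W$ or $\sqrt H$. Everything else is routine bookkeeping.
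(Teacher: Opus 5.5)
Your proposal is correct and follows essentially the paper's argument: the same product split $(\hat a-a)\langle\hat b f\rangle+a\langle(\hat b-b)f\rangle$ for each row and column term, a Cauchy--Schwarz bound showing that the $1/W$ (resp.\ $1/H$) averaging produces no dimension factor, and the triangle inequality over all terms, including the pointwise local term. The only difference is packaging: you state the averaging estimate as an operator-norm bound and apply it to each piece separately, whereas the paper bounds both pieces together pointwise by $\frac{\delta}{\sqrt W}(L_{a_r}B_r+A_rL_{b_r})\|f_{i,:}\|_2$ and then squares and sums over $j$ and $i$.
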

In particular, if the learned chart is $\delta$-close to the ideal chart, the induced operator error grows at most linearly with $\delta$. This guarantees stability with respect to chart perturbations, ensuring that small errors in the learned chart do not significantly degrade the resulting operator.
\begin{theorem}[Approximation of charted axial low-rank operators by one-block CATO]
\label{thm:favorable-chart}
Let $\widetilde{\mathcal G}_\Phi:B_M\to\mathbb{R}^{H\times W}$
be $(R_\xi,R_\eta,\varepsilon_{\mathrm{rk}})$-charted axial low-rank as defines in Definition \ref{def}. Then for every $\varepsilon_{\mathrm{nn}}>0$, there
exists a hidden width $C$ and parameters of a one-block core CATO with $R_\xi$
row heads and $R_\eta$ column heads such that
$\sup_{f\in B_M}
\|\mathcal N_\Theta(f,X)-\widetilde{\mathcal G}_\Phi f\|_2
\le
\varepsilon_{\mathrm{rk}}M+\varepsilon_{\mathrm{nn}}$.
Moreover, if the hypotheses of Lemma~\ref{lem:chart-stability} hold and
$\max_{i,j}\|\widehat\zeta_{ij}-\zeta_{ij}\|\le \delta$,
then one can choose a one-block core CATO of the same axial size such that
$\sup_{f\in B_M}
\|\mathcal N_\Theta(f,X)-\widetilde{\mathcal G}_\Phi f\|_2
\le
\varepsilon_{\mathrm{rk}}M
+
C_{\mathrm{chart}}M\delta
+
\varepsilon_{\mathrm{nn}}$.
\end{theorem}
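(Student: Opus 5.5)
The first bound follows from a triangle-inequality argument built on Lemma~\ref{lem:axial-realization}. By Definition~\ref{def} we write $\widetilde{\mathcal G}_\Phi=\mathcal T_\zeta+\mathcal R$ with $\|\mathcal R f\|_2\le\varepsilon_{\mathrm{rk}}\|f\|_2$ on $B_M$. Given $\varepsilon_{\mathrm{nn}}>0$, Lemma~\ref{lem:axial-realization} supplies a width $C$ and parameters $\Theta$ of a one-block core CATO with $R_\xi$ row heads and $R_\eta$ column heads such that $\sup_{f\in B_M}\|\mathcal N_\Theta(f,X)-\mathcal T_\zeta f\|_2\le\varepsilon_{\mathrm{nn}}$. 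For every $f\in B_M$ we then have
\[
\|\mathcal N_\Theta(f,X)-\widetilde{\mathcal G}_\Phi f\|_2\le\|\mathcal N_\Theta(f,X)-\mathcal T_\zeta f\|_2+\|\mathcal R f\|_2\le\varepsilon_{\mathrm{nn}}+\varepsilon_{\mathrm{rk}}\|f\|_2 .
\]
Since $\|f\|_2\le M$, the right-hand side is at most $\varepsilon_{\mathrm{nn}}+\varepsilon_{\mathrm{rk}}M$. Taking the supremum over $f\in B_M$ gives the first claim.

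For the perturbed chart, I read the statement as follows. The network only has access to the learned chart $\widehat\zeta$, so its attention and RoPE are driven by $\widehat\zeta$. The natural target it can realize is therefore $\mathcal T_{\widehat\zeta}$, obtained by inserting $\widehat\zeta$ into the same continuous coefficient functions $a_r,b_r,c_s,d_s,\ell$. Because $\widehat\zeta_{ij}\in K$, these functions are still continuous on the relevant set. Lemma~\ref{lem:axial-realization} therefore applies verbatim with $\zeta$ replaced by $\widehat\zeta$. It yields a one-block core CATO of the same axial size ($R_\xi$ row heads, $R_\eta$ column heads) with $\sup_{f\in B_M}\|\mathcal N_\Theta(f,X)-\mathcal T_{\widehat\zeta}f\|_2\le\varepsilon_{\mathrm{nn}}$. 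We then decompose the error into three terms:
\[
\mathcal N_\Theta(f,X)-\widetilde{\mathcal G}_\Phi f=\bigl(\mathcal N_\Theta(f,X)-\mathcal T_{\widehat\zeta}f\bigr)+\bigl(\mathcal T_{\widehat\zeta}f-\mathcal T_\zeta f\bigr)-\mathcal R f .
\]
The first term is bounded by $\varepsilon_{\mathrm{nn}}$. The second is bounded by $C_{\mathrm{chart}}\delta\|f\|_2\le C_{\mathrm{chart}}M\delta$ by Lemma~\ref{lem:chart-stability}. The third is bounded by $\varepsilon_{\mathrm{rk}}M$ as before. Summing and taking the supremum gives the stated bound.

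The only step needing care is the second. We must check that Lemma~\ref{lem:axial-realization} is genuinely chart-agnostic: its construction uses only continuity of the coefficient functions on $K$ and the fact that the chart values lie in $K$, so the width $C$ and head counts do not change when $\zeta$ is replaced by $\widehat\zeta$. If the construction hides chart-dependent constants, I would handle this in one of two ways. The width may depend on $\widehat\zeta$, which is harmless, since the theorem only asserts existence for the given perturbed chart. Alternatively, since $K$ is compact and the construction is uniform over $K$, the same width works for all charts valued in $K$. Either way only the number of heads matters for ``same axial size,'' and that is fixed at $R_\xi,R_\eta$.
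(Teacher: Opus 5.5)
Your proof is correct and matches the paper's argument: the first bound comes from the triangle inequality through $\mathcal T_\zeta$ using Lemma~\ref{lem:axial-realization}. The second comes from applying that lemma to $\mathcal T_{\widehat\zeta}$ and splitting the error into three terms, controlled by $\varepsilon_{\mathrm{nn}}$, Lemma~\ref{lem:chart-stability}, and $\varepsilon_{\mathrm{rk}}M$. The paper does not check that the lemma is chart-agnostic, which you do; the first of your two fixes (letting the width depend on $\widehat\zeta$) is all that is needed, and the uniform-over-$K$ alternative is unnecessary.
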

In particular, we show that learning a coordinate chart can transform a complex operator into one that is effectively low-rank and therefore efficiently approximable. The proofs of Lemma \ref{lem:axial-realization}, Lemma \ref{lem:chart-stability}, and Theorem \ref{thm:favorable-chart} are provided in the Appendix \ref{proof}. Together, these results show that an appropriate coordinate system can simplify the target operator. When the operator has a simpler structure in chart space, CATO can represent it effectively; moreover, if the learned chart is sufficiently close to the ideal chart, the additional error remains small. Thus, chart learning is beneficial because it reduces the effective complexity of the operator class encountered by the network. This provides a theoretical explanation for why learning a chart can improve neural operator learning: when the chart renders the operator approximately axial and low-complexity, CATO achieves a small approximation error.

\section{Experiment Results}
\label{result}

\label{benchmark}
\paragraph{Benchmarks, baselines and implementation details.} We cover a wide range of different representative datasets, including Darcy and Navier-Stokes \cite{li2020fourier}, in the regular grid setting. In addition, we compared the method across irregular geometries, including Airfoil, Plasticity, and Pipe \cite{li2023fourier}, all defined on structured meshes, and Elasticity \cite{li2023fourier}, represented as point clouds. More details can be found in the appendix \ref{Benchmark}.

We compared CATO against 15 baselines that covered a wide range of neural operators, including frequency-based and transformer-based. For the frequency-based model, we compared FNO \cite{li2020fourier}, U-FNO \cite{wen2022u}, WMT \cite{gupta2021multiwavelet}, F-FNO \cite{tran2021factorized}, U-NO \cite{rahman2022u}. GEO-FNO \cite{li2023fourier}  and LSM \cite{wu2023solving}. For the transformer-based method,  we compared with Galerkin \cite{cao2021choose}, HT-NET \cite{liu2022ht}, OFormer \cite{li2022transformer}, GNOT \cite{hao2023gnot}, FactFormer \cite{li2023scalable}, ONO \cite{xiao2023improved}, Transolver \cite{wu2024transolver} and SAOT\cite{zhou2026saot}. For a fair comparison with Transolver, we set both the number of attention heads and the number of layers to 8. For all methods, we conduct all experiments on a single NVIDIA A100 40GB GPU.

\paragraph{Architecture by geometry type.} CATO is a geometry-first framework built around a learned chart, with the attention operator instantiated according to the input topology. For regular grid or structured mesh, the inputs have a regular-grid or structured-mesh layout, so we use the charted axial CATO block from Section \ref{CATO block}. For Elasticity, the input is an unordered point cloud with 972 nodes, where no canonical row-column factorization exists. We therefore use CATO-PC, a geometry-aware point-cloud variant. CATO-PC keeps the same learned chart as the core geometric representation, but replaces axial row/column attention with chart-conditioned physical attention for global operator modeling and a KNN-based local operator for neighborhood-level interactions. This is a deliberate topology-aware instantiation rather than a change in the central idea: across all datasets, CATO first learns a geometry-adaptive chart, and only the attention pattern is adapted to the data structure. This makes the point-cloud experiment a strength, as it demonstrates that chart learning generalizes beyond the axial-attention architecture. Additional details are provided in the Appendix \ref{catopc}.

\paragraph{Main results.} 
Table~\ref{tab:mainres_standard} presents a comprehensive comparison of CATO with standard and recent neural operators on six representative benchmarks covering point clouds, structured meshes, and regular grids. Across all datasets, CATO attains the lowest relative $L^2$ error, demonstrating consistent superiority over both frequency-domain methods and attention-based architectures. Notably, although recent approaches such as Transolver and SAOT already provide strong performance, CATO further improves upon these competitive baselines and exhibits the most balanced accuracy across heterogeneous discretizations, geometries, and physical regimes. On average, CATO reduces the relative error by approximately $27\%$ compared with the strongest competing method. The gains are particularly pronounced on challenging fluid and nonlinear material benchmarks, including Navier--Stokes, where the error decreases from $0.0675$ to $0.0319$ ($52.7\%$ reduction), and Plasticity, where the error decreases from $0.0009$ to $0.0005$ ($44.4\%$ reduction). CATO also yields consistent improvements on Elasticity ($0.0081 \rightarrow 0.0070$), Airfoil ($0.0049 \rightarrow 0.0041$), Pipe ($0.0050 \rightarrow 0.0038$), and Darcy ($0.0049 \rightarrow 0.0042$). These results suggest that CATO is not specialized to a particular discretization type or PDE family, but instead provides a robust and broadly applicable operator-learning framework. Overall, the superior and stable performance across both solid- and fluid-mechanics benchmarks highlights the effectiveness of CATO in learning accurate surrogate solution operators for diverse scientific computing problems.
\begin{table*}[t]
    \centering
    \caption{
        Experimental results are compared across different methods and PDE types. The results are reported as relative $L^2$ errors. \colorbox[HTML]{BBFFBB}{Green} indicates the best result, while \underline{underlining} indicates the second-best result. (*) indicates that the result was reproduced by us.
    }
    \label{tab:mainres_standard}
    \vspace{2pt}
    \small
    \setlength{\tabcolsep}{8pt}
    \renewcommand{\arraystretch}{1.12}
    \begin{tabular}{l|ccc|cc|c}
        \toprule
        \multirow{2}{*}{\textbf{Model}} 
        & \multicolumn{3}{c|}{\textbf{Structured Mesh}} 
        & \multicolumn{2}{c|}{\textbf{Regular Grid}}
        & \textbf{Point Cloud} \\
        \cmidrule(lr){2-4} \cmidrule(lr){5-6} \cmidrule(lr){7-7}
        & Plasticity & Airfoil & Pipe & NS & Darcy & Elasticity \\
        \midrule
        FNO (2021) \cite{li2020fourier}        & /      & /      & /      & 0.1556 & 0.0108 & /      \\
        WMT (2021) \cite{gupta2021multiwavelet}         & 0.0076 & 0.0075 & 0.0077 & 0.1541 & 0.0082 & 0.0359 \\
        U-FNO (2022) \cite{wen2022u}       & 0.0039 & 0.0269 & 0.0056 & 0.2231 & 0.0183 & 0.0239 \\
        GEO-FNO (2022) \cite{li2023fourier}     & 0.0074 & 0.0138 & 0.0067 & 0.1556 & 0.0108 & 0.0229 \\
        U-NO (2023) \cite{rahman2022u}       & 0.0034 & 0.0078 & 0.0100 & 0.1713 & 0.0113 & 0.0258 \\
        F-FNO (2023) \cite{tran2021factorized}      & 0.0047 & 0.0078 & 0.0070 & 0.2322 & 0.0077 & 0.0263 \\
        LSM (2023) \cite{wu2023solving}         & 0.0025 & 0.0059 & 0.0050 & 0.1535 & 0.0065 & 0.0218 \\
        \midrule
        Galerkin (2021) \cite{cao2021choose}    & 0.0120 & 0.0118 & 0.0098 & 0.1401 & 0.0084 & 0.0240 \\
        HT-Net (2022) \cite{liu2024mitigating}      & 0.0333 & 0.0065 & 0.0059 & 0.1847 & 0.0079 & /      \\
        OFormer (2023) \cite{li2022transformer}    & 0.0017 & 0.0183 & 0.0168 & 0.1705 & 0.0124 & 0.0183 \\
        GNOT (2023) \cite{hao2023gnot}      & 0.0336 & 0.0076 & \underline{0.0047} & 0.1380 & 0.0105 & 0.0086 \\
        FactFormer (2023) \cite{li2023scalable} & 0.0312 & 0.0071 & 0.0060 & 0.1214 & 0.0109 & /      \\
        ONO (2024) \cite{xiao2023improved}         & 0.0048 & 0.0061 & 0.0052 & 0.1195 & 0.0076 & 0.0118 \\
        Transolver* (2024) \cite{wu2024transolver}  & 0.0013 & 0.0053 & 0.0050 & 0.0920 & 0.0058 & \underline{0.0081} \\
        SAOT* (2026) \cite{zhou2026saot}       & \underline{0.0009} & \underline{0.0049} & 0.0061 & \underline{0.0675} & \underline{0.0049} & 0.0085 \\
        \midrule
        \cellcolor[HTML]{D9FFD9} \textbf{CATO (Ours)} 
                     & \cellcolor[HTML]{D9FFD9}\textbf{0.0005} 
                     & \cellcolor[HTML]{D9FFD9}\textbf{0.0041} 
                     & \cellcolor[HTML]{D9FFD9}\textbf{0.0038} 
                     & \cellcolor[HTML]{D9FFD9}\textbf{0.0319} 
                     & \cellcolor[HTML]{D9FFD9}\textbf{0.0042}
                     & \cellcolor[HTML]{D9FFD9}\textbf{0.0070} \\
        \textbf{\textbf{Error Reduction} ($\downarrow$)}
                     & 44.44\% & 16.33\% & 19.15\% & 52.74\% & 14.29\% & 13.58\% \\
        \bottomrule
    \end{tabular}
    \vspace{-4pt}
\end{table*}
\begin{figure}[t!]
    \centering
    \includegraphics[width=1\linewidth]{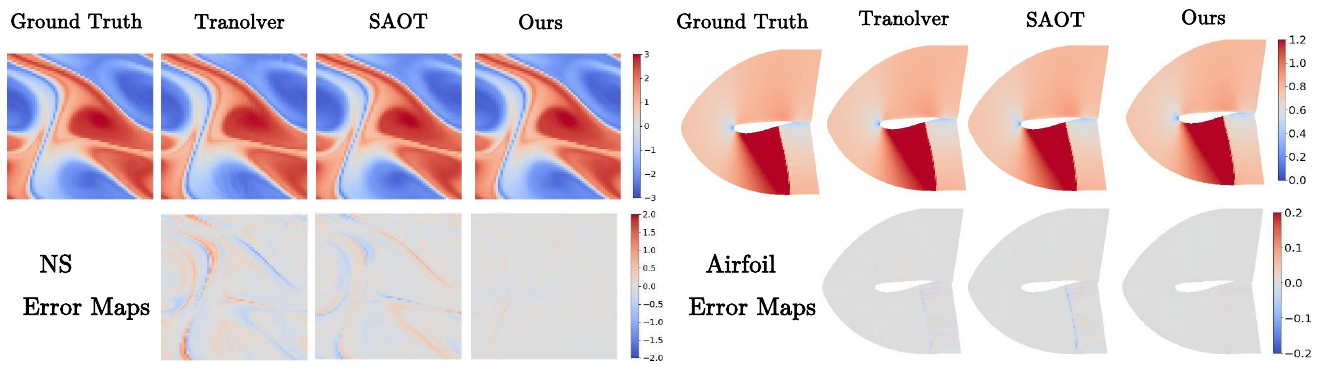}
    \caption{Visual comparison on Navier–Stokes and Airfoil benchmarks. Top: ground truth and predictions from Transolver, SAOT, and our method. Bottom: corresponding error maps}
    \label{fig:viua}
\end{figure}
Figure \ref{fig:viua} presents a qualitative comparison of prediction results on two challenging fluid-dynamics benchmarks: Navier-Stokes flow and Airfoil flow. CATO’s predictions are visually closer to the ground truth than both Transolver and SAOT, especially around turbulent vortices in Navier-Stokes and shock/wake regions near the airfoil, where its error maps are much lighter and more localized. This shows that CATO captures complex fluid dynamics and sharp physical transitions more accurately. More visualization results are available in the Appendix \ref{More abaltion}.

\paragraph{Scaling \& efficiency. } To further assess the scalability of CATO on the Darcy, we systematically evaluate its performance under variations in training sample size, spatial resolution, network depth, and embedding dimension. As shown in Figure~\ref{scale}, CATO consistently achieves lower relative $L^2$ error than SAOT across all data regimes, demonstrating superior data efficiency and robustness. Under resolution scaling, CATO maintains a clear performance advantage as the grid resolution increases and continues to benefit from finer discretizations, indicating strong generalization capability across spatial scales. In addition, CATO remains stable across changes in the number of layers and embedding dimensions, whereas SAOT consistently exhibits higher error under the same settings. These results demonstrate that CATO scales reliably across data, resolution, architecture depth, and feature dimension, highlighting its effectiveness as a robust and efficient neural operator for PDE.

To further analyze the computational efficiency of the proposed model, we present its efficiency metrics in Figure \ref{eff} (a) and (b) compared with Transolver and SAOT on the Darcy and Pipe benchmarks. Specifically, on the Darcy benchmark, our model achieves substantially lower computational cost, reducing the number of parameters by around 85\% and GFLOPs by 69\% compared to SAOT. On the Pipe benchmark, our model further demonstrates clear efficiency gains, achieving the lowest GFLOPs and shortest training time among all compared methods. In addition, the bubble size indicates that our model uses fewer parameters than both baselines, showing that it is more compact while remaining computationally efficient. These results highlight the favorable efficiency of our model in terms of training time, computational cost, and parameter count across different PDE benchmarks.

\begin{figure}[t!]
    \centering
    \includegraphics[width=1\linewidth]{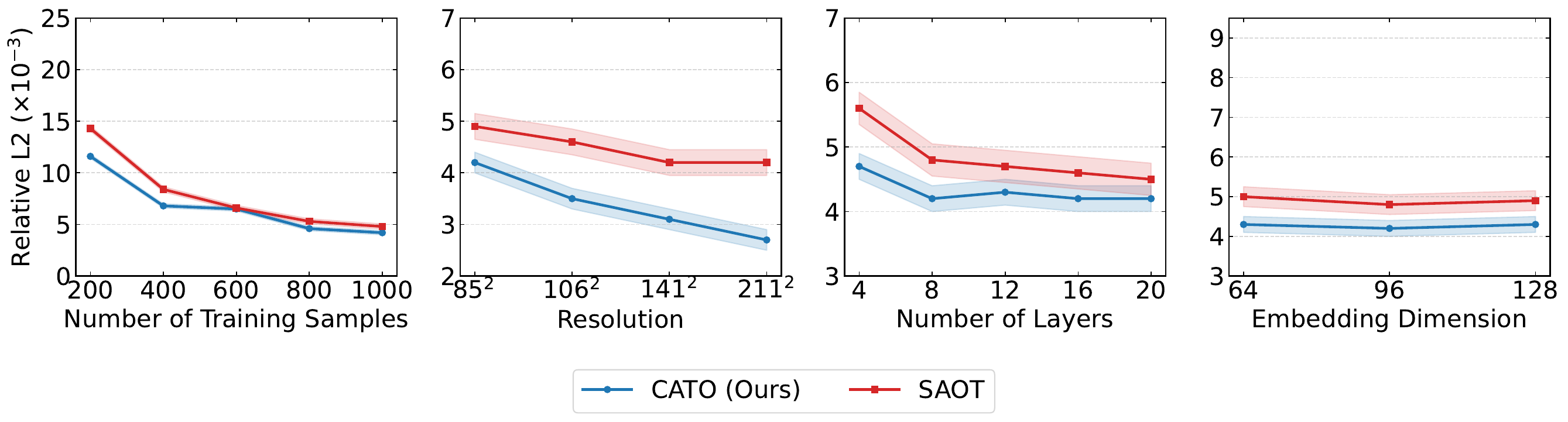}
    \caption{Model scaling performance on Darcy flow. We compare our method with SAOT across training sample size, resolution, layer count, and embedding dimension.
    }
    \label{scale}
\end{figure}

\begin{figure}[t!]
    \centering
    \includegraphics[width=1\linewidth]{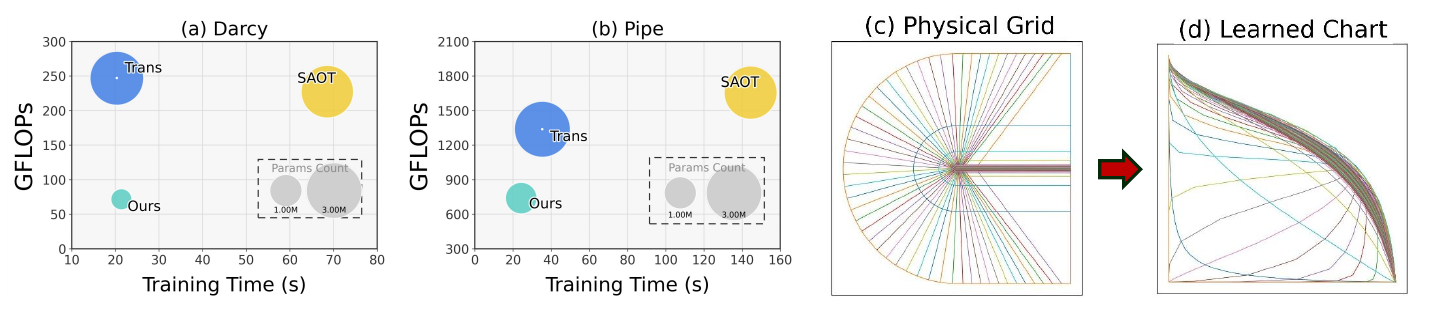}
    \caption{ (a) and (b) show the efficiency on Darcy and Pipe in terms of training time per epoch, number of parameters, and GFLOPs.  (c) and (d) show the physical grid and learned chart space
    }
    \label{eff}
\end{figure}

\paragraph{Model analysis.} 
Figure~\ref{eff}c--d illustrates the transformation from the physical grid to the learned chart space. The learned chart acts as a geometry-adaptive coordinate system that concentrates resolution along dynamically significant directions while flattening variations induced by the underlying physics. This transformation simplifies the operator representation, making it more structured and easier to approximate than in the original coordinate space.
To quantify this effect, we analyze the learned chart via principal component analysis. We observe that $94.0\%$ of the variance is captured by the first principal component, while the second accounts for only $6.0\%$. This strong anisotropy indicates that the learned chart collapses the original two-dimensional domain onto a nearly one-dimensional manifold, aligned with the dominant physical direction (e.g., pressure gradient in Darcy flow). The participation-ratio effective dimension of $1.126$ further confirms that the intrinsic dimensionality is significantly reduced.
This directly supports our theoretical hypothesis: the learned chart induces a low-dimensional, approximately separable structure in which the solution operator becomes easier to approximate, explaining why axial attention is particularly effective in the chart space.
Finally, we compare against a coordinate-normalization baseline that removes translation and scaling while preserving the original coordinate structure. Normalization yields an error of $0.0045$, whereas the learned chart achieves $0.0041$. This demonstrates that the gains arise from learning a geometry-adaptive representation, rather than simple rescaling, and validates that chart learning provides a complementary source of improvement beyond architectural design.

\section{Conclusion}
This paper presents CATO, a charted axial transformer operator for solving PDEs on general geometries. By learning a continuous geometry-adaptive chart, applying efficient axial attention in chart space, and incorporating local operators with mixed-form value and derivative supervision, CATO captures both long-range physical interactions and local differential structures. Experiments on six PDE benchmarks show that CATO consistently achieves state-of-the-art accuracy across regular grids, structured meshes, and point clouds, while theoretical analysis supports its ability to approximate low-complexity solution operators under a favorable chart. 
More broadly, CATO highlights the importance of learning coordinate representations for neural operator design. These results suggest that coordinate-aware attention may provide a scalable and physically meaningful framework for scientific machine learning.

\paragraph{Limitations.} While CATO demonstrates strong performance on 2D PDE benchmarks, extending the approach to large-scale 3D and multiphysics settings remains future work.

\section*{Acknowledgments}
CWC is supported by the  Swiss National Science Foundation (SNSF)  under grant number 20HW-1\_220785. It also acknowledge CMI, University of Cambridge. CBS acknowledges support from the Philip Leverhulme Prize, the Royal Society Wolfson Fellowship, the EPSRC advanced career fellowship EP/V029428/1, EPSRC grants EP/S026045/1 and EP/T003553/1, EP/N014588/1, EP/T017961/1, the Wellcome Innovator Awards 215733/Z/19/Z and 221633/Z/20/Z, CCMI and the Alan Turing Institute. 
AIAR gratefully acknowledges the support of the Yau Mathematical Sciences Center, Tsinghua University. This work is also supported by the Tsinghua University Dushi Program.

\bibliography{main.bib}
\bibliographystyle{plain}


\newpage
\appendix

\addtocontents{toc}{\protect\setcounter{tocdepth}{2}}

\textbf{\Large CATO: Charted Attention for Neural PDE Operators -- Appendix}
\label{sec:appendix}

\noindent\rule{\textwidth}{0.6pt}
\tableofcontents
\noindent\rule{\textwidth}{0.6pt}
\section{Table of Notation}
\label{app:notation}

\begin{longtable}{
  >{\raggedright\arraybackslash}p{0.23\linewidth}
  >{\raggedright\arraybackslash}p{0.52\linewidth}
  >{\raggedright\arraybackslash}p{0.19\linewidth}
}
\caption{Notation used throughout the paper.}
\label{tab:notation}
\\
\toprule
\textbf{Symbol} & \textbf{Description} & \textbf{Shape / Domain} \\
\midrule
\endfirsthead

\toprule
\textbf{Symbol} & \textbf{Description} & \textbf{Shape / Domain} \\
\midrule
\endhead

\midrule
\multicolumn{3}{r}{\emph{Continued on next page}} \\
\endfoot

\bottomrule
\endlastfoot

\multicolumn{3}{l}{\textbf{Mesh, inputs, and outputs}} \\
\addlinespace

\(\Omega\) 
& Physical domain on which the PDE is defined. 
& \(\Omega \subset \mathbb{R}^{2}\) \\

\(H, W\) 
& Number of mesh points along the two structured mesh directions. 
& Positive integers \\

\(N\) 
& Total number of spatial nodes. 
& \(N = HW\) \\

\(B\) 
& Batch size. 
& Positive integer \\

\((i,j)\) 
& Two-dimensional mesh index. 
& \(1 \leq i \leq H,\; 1 \leq j \leq W\) \\

\(n\) 
& Flattened node index. 
& \(1 \leq n \leq N\) \\

\(\mathbf{x}_{ij}\) 
& Physical coordinate of node \((i,j)\). We write \(\mathbf{x}_{ij}=(x_{ij},y_{ij})\) to distinguish the vector coordinate from its scalar components. 
& \(\mathbb{R}^{2}\) \\

\(X\) 
& Collection of all mesh coordinates. 
& \(\{\mathbf{x}_{ij}\}_{i,j}\), or \(\mathbb{R}^{B \times N \times 2}\) \\

\(\mathbf{f}_{ij}\) 
& Optional node-wise auxiliary input features, such as coefficients, source terms, or field descriptors. 
& \(\mathbb{R}^{d_f}\) \\

\(F\) 
& Collection of auxiliary input features over all mesh nodes. 
& \(\mathbb{R}^{B \times N \times d_f}\) \\

\(d_f\) 
& Dimension of the auxiliary input feature vector. 
& Nonnegative integer \\

\(z^{\mathrm{in}}_{ij}\) 
& Input token at node \((i,j)\), formed by concatenating coordinate and auxiliary features. 
& \([\mathbf{x}_{ij},\mathbf{f}_{ij}]\) if \(d_f>0\); otherwise \(\mathbf{x}_{ij}\) \\

\(u\) 
& Ground-truth scalar solution field. 
& \(\mathbb{R}^{H \times W}\) or \(\mathbb{R}^{B \times N \times 1}\) \\

\(\hat{u}\) 
& Predicted scalar solution field. 
& \(\mathbb{R}^{H \times W}\) or \(\mathbb{R}^{B \times N \times 1}\) \\

\(\hat{\mathbf{q}}\) 
& Auxiliary vector output used as a gradient-like flux proxy. 
& \(\mathbb{R}^{B \times N \times 2}\) \\

\(\nabla u\) 
& Spatial gradient of the target scalar field. 
& \(\mathbb{R}^{B \times N \times 2}\) \\

\(\nabla \hat{u}\) 
& Reconstructed spatial gradient of the predicted scalar field. 
& \(\mathbb{R}^{B \times N \times 2}\) \\

\(\mathcal{G}_{\theta}\) 
& Learned neural solution operator mapping mesh coordinates and optional features to the solution field. 
& \((X,F) \mapsto \hat{u}\) \\

\addlinespace
\multicolumn{3}{l}{\textbf{CATO architecture}} \\
\addlinespace

\(\Phi_{\mathrm{pre}}\) 
& Input lifting network that maps each input token to the latent feature space. 
& MLP \\

\(C\) 
& Latent embedding dimension. 
& Positive integer \\

\(H^{(\ell)}\) 
& Hidden representation after the \(\ell\)-th CATO block. 
& \(\mathbb{R}^{B \times H \times W \times C}\) \\

\(\mathbf{h}^{(\ell)}_{ij}\) 
& Hidden feature vector at node \((i,j)\) after layer \(\ell\). 
& \(\mathbb{R}^{C}\) \\

\(L\) 
& Number of stacked CATO blocks. 
& Positive integer \\

\(\mathrm{LN}(\cdot)\) 
& Layer normalization. 
& -- \\

\(\mathrm{MLP}(\cdot)\) 
& Pointwise feed-forward network used inside each block. 
& -- \\

\(\mathrm{DWConv}\) 
& Depthwise convolution used in the local operator branch. 
& \(k \times k\) convolution \\

\(\mathrm{PWConv}\) 
& Pointwise convolution used in the local operator branch. 
& \(1 \times 1\) convolution \\

\(\mathbf{w}_{u}, b_u\) 
& Linear readout parameters for the scalar prediction head. 
& \(\mathbf{w}_{u}\in\mathbb{R}^{C}\) \\

\(W_q, \mathbf{b}_q\) 
& Linear readout parameters for the auxiliary flux head. 
& \(W_q\in\mathbb{R}^{2\times C}\) \\

\addlinespace
\multicolumn{3}{l}{\textbf{Learned chart and positional encoding}} \\
\addlinespace

\(\Phi_{\mathrm{chart}}\) 
& Learned continuous chart mapping physical coordinates to latent chart coordinates. 
& \(\mathbb{R}^{2} \to [-1,1]^{2}\) \\

\(\boldsymbol{\zeta}_{ij}\) 
& Learned chart coordinate of node \((i,j)\). 
& \((\xi_{ij},\eta_{ij})\in[-1,1]^{2}\) \\

\(\xi_{ij}\) 
& First chart coordinate, used for row-wise axial attention. 
& \([-1,1]\) \\

\(\eta_{ij}\) 
& Second chart coordinate, used for column-wise axial attention. 
& \([-1,1]\) \\

\(K\) 
& Compact chart domain containing all learned chart coordinates. 
& \(K\subset[-1,1]^2\) \\

\(V_1,V_2,c_1,c_2\) 
& Parameters of the chart MLP. 
& -- \\

\(\theta\) 
& RoPE base parameter. 
& \(\theta>0\) \\

\(\omega_r\) 
& Angular frequency for the \(r\)-th RoPE channel pair. 
& \(\omega_r=\theta^{-2r/d_h}\) \\

\(R(p)\) 
& Continuous rotary positional encoding matrix evaluated at position \(p\). 
& Block-diagonal rotation matrix \\

\(p\) 
& Continuous positional input to RoPE; in CATO this is a chart coordinate. 
& \(p=\xi_{ij}\) or \(p=\eta_{ij}\) \\

\addlinespace
\multicolumn{3}{l}{\textbf{Charted axial attention}} \\
\addlinespace

\(M\) 
& Number of attention heads. 
& Positive integer \\

\(d_h\) 
& Per-head dimension. 
& \(d_h=C/M\) \\

\(W_Q,W_K,W_V\) 
& Query, key, and value projection matrices. 
& -- \\

\(\mathbf{q}_{ij},\mathbf{k}_{ij},\mathbf{v}_{ij}\) 
& Query, key, and value vectors at node \((i,j)\). 
& \(\mathbb{R}^{C}\) before head splitting \\

\(\mathbf{q}^{(m)}_{ij},\mathbf{k}^{(m)}_{ij},\mathbf{v}^{(m)}_{ij}\) 
& Query, key, and value vectors for attention head \(m\). 
& \(\mathbb{R}^{d_h}\) \\

\(\tilde{\mathbf{q}}^{(m)}_{ij},\tilde{\mathbf{k}}^{(m)}_{ij}\) 
& RoPE-rotated query and key vectors. 
& \(\mathbb{R}^{d_h}\) \\

\(\alpha^{(m)}_{i,j,t}\) 
& Row-attention weight from node \((i,j)\) to node \((i,t)\) in head \(m\). 
& Softmax-normalized \\

\(\beta^{(m)}_{i,j,s}\) 
& Column-attention weight from node \((i,j)\) to node \((s,j)\) in head \(m\). 
& Softmax-normalized \\

\(\mathrm{Attn}_{\mathrm{row}}(h;\xi)\) 
& Row-wise axial attention using the chart coordinate \(\xi\). 
& -- \\

\(\mathrm{Attn}_{\mathrm{col}}(h;\eta)\) 
& Column-wise axial attention using the chart coordinate \(\eta\). 
& -- \\

\(\mathcal{A}(h,\boldsymbol{\zeta})\) 
& Charted axial attention output, defined as the sum of row and column attention. 
& \(\mathrm{Attn}_{\mathrm{row}}(h;\xi)+\mathrm{Attn}_{\mathrm{col}}(h;\eta)\) \\

\(W^{\mathrm{row}}_O,W^{\mathrm{col}}_O\) 
& Output projections for row and column attention. 
& -- \\

\addlinespace
\multicolumn{3}{l}{\textbf{Physical loss and discrete gradients}} \\
\addlinespace

\(\Delta_i u_{ij}\) 
& Centered finite difference of \(u\) along the first mesh direction. 
& \(u_{i+1,j}-u_{i-1,j}\) \\

\(\Delta_j u_{ij}\) 
& Centered finite difference of \(u\) along the second mesh direction. 
& \(u_{i,j+1}-u_{i,j-1}\) \\

\(\Delta_i \mathbf{x}_{ij}\) 
& Centered coordinate difference along the first mesh direction. 
& \(\mathbf{x}_{i+1,j}-\mathbf{x}_{i-1,j}\) \\

\(\Delta_j \mathbf{x}_{ij}\) 
& Centered coordinate difference along the second mesh direction. 
& \(\mathbf{x}_{i,j+1}-\mathbf{x}_{i,j-1}\) \\

\(a,b,c,d\) 
& Components of the local coordinate-difference vectors, with \(\Delta_i\mathbf{x}_{ij}=(a,b)\) and \(\Delta_j\mathbf{x}_{ij}=(c,d)\). 
& Scalars \\

\(u_x,u_y\) 
& Reconstructed physical gradient components at node \((i,j)\). 
& Scalars \\

\(ad-bc\) 
& Determinant of the local coordinate-difference matrix. Nonzero determinant ensures a locally nonsingular gradient reconstruction. 
& Scalar \\

\(\mathrm{Grad}(u,X)\) 
& Mesh-consistent gradient reconstruction operator applied to scalar field \(u\) on mesh \(X\). 
& \(\mathbb{R}^{B\times N\times 2}\) \\

\(\mathcal{L}_{\mathrm{val}}\) 
& Relative \(L^2\) value loss between \(\hat{u}\) and \(u\). 
& Scalar \\

\(\mathcal{L}_{\mathrm{grad}}\) 
& Gradient-matching loss between \(\nabla\hat{u}\) and \(\nabla u\). 
& Scalar \\

\(\mathcal{L}_{\mathrm{flux}}\) 
& Auxiliary flux loss between \(\hat{\mathbf{q}}\) and \(\nabla u\). 
& Scalar \\

\(\mathcal{L}_{\mathrm{cons}}\) 
& Consistency loss between \(\hat{\mathbf{q}}\) and \(\nabla\hat{u}\). 
& Scalar \\

\(\lambda_g,\lambda_f,\lambda_c\) 
& Weights for the gradient, flux, and consistency losses. 
& Nonnegative scalars \\

\(\mathcal{L}\) 
& Total training loss. 
& \(\mathcal{L}_{\mathrm{val}}+\lambda_g\mathcal{L}_{\mathrm{grad}}+\lambda_f\mathcal{L}_{\mathrm{flux}}+\lambda_c\mathcal{L}_{\mathrm{cons}}\) \\

\(\varepsilon\) 
& Small numerical constant used for stable relative-error computation. 
& Positive scalar \\

\addlinespace
\multicolumn{3}{l}{\textbf{Theory}} \\
\addlinespace

\(f\) 
& Scalar input field used in the theoretical analysis. 
& \(\mathbb{R}^{H\times W}\) \\

\(B_M\) 
& \(L^2\)-bounded input ball used in the approximation analysis. 
& \(\{f\in\mathbb{R}^{H\times W}:\|f\|_2\leq M\}\) \\

\(M\) 
& Radius of the input ball \(B_M\). 
& Positive scalar \\

\(\widetilde{\mathcal{G}}_{\Phi}\) 
& Target operator expressed with respect to a chart. 
& \(B_M\to\mathbb{R}^{H\times W}\) \\

\(T_{\boldsymbol{\zeta}}\) 
& Finite-rank charted axial operator associated with chart \(\boldsymbol{\zeta}\). 
& \(B_M\to\mathbb{R}^{H\times W}\) \\

\(\mathcal{R}\) 
& Residual operator in the charted axial low-rank decomposition. 
& \(\widetilde{\mathcal{G}}_{\Phi}=T_{\boldsymbol{\zeta}}+\mathcal{R}\) \\

\(R_{\xi},R_{\eta}\) 
& Row-wise and column-wise axial ranks; equivalently, the number of row and column components in the theoretical decomposition. 
& Positive integers \\

\(a_r,b_r\) 
& Continuous coefficient functions used in the row-wise part of \(T_{\boldsymbol{\zeta}}\). 
& \(K\to\mathbb{R}\) \\

\(c_s,d_s\) 
& Continuous coefficient functions used in the column-wise part of \(T_{\boldsymbol{\zeta}}\). 
& \(K\to\mathbb{R}\) \\

\(\ell\) 
& Continuous coefficient function for the local pointwise term in \(T_{\boldsymbol{\zeta}}\). 
& \(K\to\mathbb{R}\) \\

\(m_r(i;f)\) 
& Row-wise averaged feature in the theoretical construction. 
& \(\frac{1}{W}\sum_{t=1}^{W} b_r(\boldsymbol{\zeta}_{it})f_{it}\) \\

\(n_s(j;f)\) 
& Column-wise averaged feature in the theoretical construction. 
& \(\frac{1}{H}\sum_{p=1}^{H} d_s(\boldsymbol{\zeta}_{pj})f_{pj}\) \\

\(\varepsilon_{\mathrm{rk}}\) 
& Error of the charted axial low-rank approximation. 
& Nonnegative scalar \\

\(\varepsilon_{\mathrm{nn}}\) 
& Neural approximation error of the one-block CATO realization. 
& Positive scalar \\

\(N_{\Theta}\) 
& Neural operator realized by a one-block CATO core followed by a linear readout. 
& \(B_M\to\mathbb{R}^{H\times W}\) \\

\(\hat{\boldsymbol{\zeta}}_{ij}\) 
& Perturbed or learned approximation of the ideal chart coordinate. 
& \(K\subset[-1,1]^2\) \\

\(\delta\) 
& Maximum chart perturbation size. 
& \(\max_{i,j}\|\hat{\boldsymbol{\zeta}}_{ij}-\boldsymbol{\zeta}_{ij}\|\) \\

\(A_r,B_r,C_s,D_s,L_0\) 
& Uniform bounds on the coefficient functions \(a_r,b_r,c_s,d_s,\ell\), respectively. 
& Nonnegative scalars \\

\(L_{a_r},L_{b_r},L_{c_s},L_{d_s},L_{\ell}\) 
& Lipschitz constants of the coefficient functions \(a_r,b_r,c_s,d_s,\ell\), respectively. 
& Nonnegative scalars \\

\(C_{\mathrm{chart}}\) 
& Stability constant controlling the effect of chart perturbations on the axial operator. 
& \(\sum_{r=1}^{R_{\xi}}(L_{a_r}B_r+A_rL_{b_r})+\sum_{s=1}^{R_{\eta}}(L_{c_s}D_s+C_sL_{d_s})+L_{\ell}\) \\

\end{longtable}
\section{Further Theoretical Results}
\label{proof}
In this section, we will provide the complete proof of Lemma 1, Lemma 2, and Theorem 1.

\begin{lemma}[Neural realization of charted axial finite-rank operators]
\label{applemma}
Let $\mathcal T_\zeta:B_M\to\mathbb{R}^{H\times W}$ be given by
\begin{equation}
(\mathcal T_\zeta f)_{ij}
=
\sum_{r=1}^{R_\xi}
a_r(\zeta_{ij})
\Big(\frac1W\sum_{t=1}^W b_r(\zeta_{it})f_{it}\Big)
+
\sum_{s=1}^{R_\eta}
c_s(\zeta_{ij})
\Big(\frac1H\sum_{p=1}^H d_s(\zeta_{pj})f_{pj}\Big)
+
\ell(\zeta_{ij})f_{ij},
\end{equation}
where $a_r,b_r,c_s,d_s,\ell$ are continuous on $K$.
Then for every $\varepsilon_{\mathrm{nn}}>0$, there exist a hidden width $C$
and parameters of a one-block core CATO with $R_\xi$ row heads and $R_\eta$
column heads such that
\begin{equation}
\sup_{f\in B_M}\|\mathcal N_\Theta(f,X)-\mathcal T_\zeta f\|_2
\le
\varepsilon_{\mathrm{nn}}.
\end{equation}
\end{lemma}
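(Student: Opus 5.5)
The plan is to build the one-block core CATO explicitly, with each attention head carrying out one directional average. The approximation error then comes only from the MLPs used for the coefficient functions and for the product nonlinearity.

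One subtlety matters from the start: the block receives $\zeta_{ij}$ only through RoPE, whereas $\Phi_{\mathrm{pre}}$ sees $(x_{ij},f_{ij})$. Since the chart is a fixed function of the finitely many mesh nodes, we are free to choose $\Phi_{\mathrm{pre}}$ so that it produces the needed functions of $\zeta_{ij}=\Phi_{\mathrm{chart}}(x_{ij})$.

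\textbf{Step 1 (lifting).} Choose the width $C$ and design $\Phi_{\mathrm{pre}}$ so that, up to error $\epsilon_1$, $H^{(0)}_{ij}$ contains the following channels:
\begin{itemize}
\item the products $b_r(\zeta_{ij})f_{ij}$ and $d_s(\zeta_{pj})f_{pj}$;
\item the values $a_r(\zeta_{ij})$, $c_s(\zeta_{ij})$ and $\ell(\zeta_{ij})f_{ij}$.
\end{itemize}
The inputs live in a compact set, $f_{ij}\in[-M,M]$ and $x_{ij}$ ranging over finitely many nodes. So the universal approximation theorem gives a single two-layer MLP approximating this continuous map uniformly.

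\textbf{Step 2 (attention).} In each row head $r$, set $W_Q=W_K=0$. The softmax weights are then uniform, $\alpha^{(m)}_{i,j,t}=1/W$, because the RoPE rotation of a zero vector is still zero, so the scores vanish identically. Let $W_V$ select the channel $b_r(\zeta)f$. The head output is then the row mean $m_r(i;f)=\frac1W\sum_t b_r(\zeta_{it})f_{it}$, and $W_O^{\mathrm{row}}$ writes it into a fresh channel. The column heads $s$ do the same and produce $n_s(j;f)$. After the residual connection, $\widetilde H_{ij}$ contains $a_r(\zeta_{ij})$, $m_r(i;f)$, $c_s(\zeta_{ij})$, $n_s(j;f)$ and $\ell(\zeta_{ij})f_{ij}$.

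All of these quantities are uniformly bounded on $B_M$. For example, $|m_r|\le \|b_r\|_\infty\max|f|\le \|b_r\|_\infty M$.

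\textbf{Step 3 (products and readout).} Use the MLP to approximate the map $(\alpha,\mu,\gamma,\nu,\lambda)\mapsto \sum_r\alpha_r\mu_r+\sum_s\gamma_s\nu_s+\lambda$ uniformly on a compact box that contains all reachable channel values, enlarged slightly to absorb the $\epsilon_1$ errors. The MLP writes this into a designated channel, and the residual copy of that channel is set to zero by keeping the channel empty in $H^{(0)}$. A linear readout $w_{\mathrm{out}}$ then picks out this channel.

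\textbf{Error and main obstacle.} Pointwise errors are bounded by $\epsilon_1$ propagated through the Lipschitz constant of the product map on the box, plus the MLP error $\epsilon_2$. The $\ell^2$ norm over the $HW$ nodes is at most $\sqrt{HW}$ times the sup error. Choosing $\epsilon_1,\epsilon_2\lesssim\varepsilon_{\mathrm{nn}}/\sqrt{HW}$ finishes the proof, and $C$ is whatever width these MLPs require.

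I expect the main obstacle to be controlling error propagation in Step 3. Perturbed inputs must stay inside the compact box on which the MLP approximation holds, which is why the box is enlarged by a margin. Uniform continuity of the MLP's target on that enlarged box then converts the $\epsilon_1$ input errors into output errors. If one instead insists that the coefficients depend on $\zeta$ only through RoPE, this would become much harder, so I would adopt the finite-node lifting argument above.
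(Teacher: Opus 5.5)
Your proposal is correct and follows essentially the same construction as the paper. In both, $\Phi_{\mathrm{pre}}$ approximates the composed channels such as $a_r(\Phi_{\mathrm{chart}}(x))$ and $b_r(\Phi_{\mathrm{chart}}(x))z$ while leaving the summary and output channels exactly zero. Zero query/key projections make RoPE irrelevant and the axial softmax uniform, the block MLP approximates the bilinear map on a margin-enlarged box, and the readout selects the output channel, with a $\sqrt{HW}$ factor converting pointwise error to $\ell^2$ error. The only cosmetic differences are that you fold $\ell(\zeta_{ij})f_{ij}$ into the lifting rather than passing $\ell$ and $f$ separately to the MLP, and you propagate the lifting error through the Lipschitz constant of the bilinear map, whereas the paper uses uniform continuity of $F$ on the compact box.
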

\begin{proof}
We first define the row part and column part of the operator:
\begin{equation}
m_r(i;f):=\frac1W\sum_{t=1}^W b_r(\zeta_{it})f_{it},
\qquad
n_s(j;f):=\frac1H\sum_{p=1}^H d_s(\zeta_{pj})f_{pj},
\end{equation}
Then we can write the output of $\mathcal T_\zeta$ as follows:
\begin{equation}
(\mathcal T_\zeta f)_{ij}
=
\sum_{r=1}^{R_\xi} a_r(\zeta_{ij})\,m_r(i;f)
+
\sum_{s=1}^{R_\eta} c_s(\zeta_{ij})\,n_s(j;f)
+
\ell(\zeta_{ij})f_{ij}.
\end{equation}

We can choose any compact set $\mathcal X\subset\mathbb{R}^2$ that containing all mesh $\{x_{ij}\}$. Since $\Phi_{\mathrm{chart}}$ and
$a_r,b_r,c_s,d_s,\ell$ are continuous, and we know that the composition of two functions is also continuous, then we know that the following functions on
$\mathcal X\times[-M,M]$ are also continuous:
\[
g^{(P)}_r(x,z):=a_r(\Phi_{\mathrm{chart}}(x)),
\qquad
g^{(U)}_r(x,z):=b_r(\Phi_{\mathrm{chart}}(x))\,z,
\]
\begin{equation}
g^{(Q)}_s(x,z):=c_s(\Phi_{\mathrm{chart}}(x)),
\qquad
g^{(V)}_s(x,z):=d_s(\Phi_{\mathrm{chart}}(x))\,z,
\end{equation}
\[
g^{(\Lambda)}(x,z):=\ell(\Phi_{\mathrm{chart}}(x)),
\qquad
g^{(Z)}(x,z):=z.
\]

Furthermore, let
\begin{equation}
A_r:=\|a_r\|_\infty,\quad
B_r:=\|b_r\|_\infty,\quad
C_s:=\|c_s\|_\infty,\quad
D_s:=\|d_s\|_\infty,\quad
L_0:=\|\ell\|_\infty.
\end{equation}
Define the compact set
\begin{equation}
\begin{aligned}
\mathcal D
:={}&
\prod_{r=1}^{R_\xi}[-A_r-1,A_r+1]
\times
\prod_{r=1}^{R_\xi}[-B_rM-1,B_rM+1]
\times
\prod_{s=1}^{R_\eta}[-C_s-1,C_s+1] \\
&\times
\prod_{s=1}^{R_\eta}[-D_sM-1,D_sM+1]
\times
[-L_0-1,L_0+1]
\times
[-M-1,M+1].
\end{aligned}
\end{equation}
On compact set $\mathcal D$, we define the following function:
\begin{equation}
F\big((p_r)_{r=1}^{R_\xi},(u_r)_{r=1}^{R_\xi},
      (q_s)_{s=1}^{R_\eta},(v_s)_{s=1}^{R_\eta},\lambda,z\big)
:=
\sum_{r=1}^{R_\xi} p_r u_r
+
\sum_{s=1}^{R_\eta} q_s v_s
+
\lambda z.
\end{equation}
Since $F$ is continuous on the compact set $\mathcal D$, it is uniformly
continuous. Hence there exists $\tau\in(0,1)$ such that whenever
$y,\widetilde y\in\mathcal D$ satisfy
\begin{equation}
\|y-\widetilde y\|_\infty\le \tau,
\end{equation}
we have
\begin{equation}
|F(y)-F(\widetilde y)|\le \frac{\varepsilon_{\mathrm{nn}}}{2\sqrt N}.
\end{equation}
Assume that there exists a hidden width $C$ large enough so that 
distinct scalar channels can be reserved for
\begin{equation}
\{P_r,U_r,M_r\}_{r=1}^{R_\xi},
\qquad
\{Q_s,V_s,N_s\}_{s=1}^{R_\eta},
\qquad
\Lambda,\ Z,\ O.
\end{equation}
By universal approximation for pointwise MLPs on compact sets, choose
$\Phi_{\mathrm{pre}}$ so that for every $(i,j)$ and every $f\in B_M$,
the designated channels of $H^{(0)}_{ij}=\Phi_{\mathrm{pre}}(x_{ij},f_{ij})$
satisfy
\begin{equation}
|P_{r,ij}-a_r(\zeta_{ij})|\le \tau,
\qquad
|U_{r,ij}-b_r(\zeta_{ij})f_{ij}|\le \tau,
\end{equation}
\begin{equation}
|Q_{s,ij}-c_s(\zeta_{ij})|\le \tau,
\qquad
|V_{s,ij}-d_s(\zeta_{ij})f_{ij}|\le \tau,
\end{equation}
\begin{equation}
|\Lambda_{ij}-\ell(\zeta_{ij})|\le \tau,
\qquad
|Z_{ij}-f_{ij}|\le \tau,
\end{equation}
while the summary and output channels are initialized exactly to zero:
\begin{equation}
M_{r,ij}=0,\qquad N_{s,ij}=0,\qquad O_{ij}=0.
\end{equation}

We next construct the axial attention block. For each of the $R_\xi$ row heads,
set the query and key projections to zero. After continuous RoPE, the rotated
queries and keys remain zero, so all row-attention logits are zero and the
softmax weights are uniform:
\begin{equation}
\alpha^{(r)}_{i,j,t}=\frac1W
\qquad
\text{for all }i,j,t.
\end{equation}
Choose the value projection of row head $r$ to select the designated scalar
channel $U_r$ and set all other value coordinates of that head to zero. Then the
scalar output of row head $r$ at node $(i,j)$ is
\begin{equation}
\widehat m_r(i;f)
=
\frac1W\sum_{t=1}^W U_{r,it}.
\end{equation}
Hence
\begin{equation}
\big|\widehat m_r(i;f)-m_r(i;f)\big|
=
\left|
\frac1W\sum_{t=1}^W
\Big(U_{r,it}-b_r(\zeta_{it})f_{it}\Big)
\right|
\le
\frac1W\sum_{t=1}^W \tau
=
\tau.
\end{equation}
Choose the row output projection so that the output of row head $r$ is written
into the reserved summary channel $M_r$ and all other row-output channels are
zero.

Similarly, for each of the $R_\eta$ column heads, set the query and key
projections to zero, so that the column-attention weights are uniform:
\begin{equation}
\beta^{(s)}_{i,j,p}=\frac1H
\qquad
\text{for all }i,j,p.
\end{equation}
Choose the value projection of column head $s$ to select channel $V_s$, and let
the column output projection write the result into the reserved summary channel
$N_s$. Then the scalar output of column head $s$ at node $(i,j)$ is
\begin{equation}
\widehat n_s(j;f)
=
\frac1H\sum_{p=1}^H V_{s,pj},
\end{equation}
and therefore
\begin{equation}
\big|\widehat n_s(j;f)-n_s(j;f)\big|
=
\left|
\frac1H\sum_{p=1}^H
\Big(V_{s,pj}-d_s(\zeta_{pj})f_{pj}\Big)
\right|
\le
\tau.
\end{equation}

By construction, the axial block writes only into the summary channels
$M_r,N_s$. Therefore, after the residual update
\begin{equation}
\widetilde H = H^{(0)} + A(H^{(0)},\zeta),
\end{equation}
the channels $P_r,Q_s,\Lambda,Z$ remain unchanged, the output channel $O$
remains zero, and the summary channels satisfy
\begin{equation}
\widetilde M_{r,ij}=\widehat m_r(i;f),
\qquad
\widetilde N_{s,ij}=\widehat n_s(j;f).
\end{equation}

For each node $(i,j)$, define the exact tuple
\begin{equation}
y_{ij}(f)
:=
\Big(
(a_r(\zeta_{ij}))_{r=1}^{R_\xi},
(m_r(i;f))_{r=1}^{R_\xi},
(c_s(\zeta_{ij}))_{s=1}^{R_\eta},
(n_s(j;f))_{s=1}^{R_\eta},
\ell(\zeta_{ij}),
f_{ij}
\Big),
\end{equation}
and the approximate tuple
\begin{equation}
\widehat y_{ij}(f)
:=
\Big(
(\widetilde P_{r,ij})_{r=1}^{R_\xi},
(\widetilde M_{r,ij})_{r=1}^{R_\xi},
(\widetilde Q_{s,ij})_{s=1}^{R_\eta},
(\widetilde N_{s,ij})_{s=1}^{R_\eta},
\widetilde\Lambda_{ij},
\widetilde Z_{ij}
\Big).
\end{equation}
From the construction above, every component differs by at most $\tau$, hence
\begin{equation}
\|\widehat y_{ij}(f)-y_{ij}(f)\|_\infty\le \tau.
\end{equation}
Therefore, by the choice of $\tau$,
\begin{equation}
\left|F(\widehat y_{ij}(f)) - F(y_{ij}(f))\right|
\le
\frac{\varepsilon_{\mathrm{nn}}}{2\sqrt N}.
\end{equation}
Since
\begin{equation}
F(y_{ij}(f))=(\mathcal T_\zeta f)_{ij},
\end{equation}
it remains to approximate $F$ pointwise from the channels of $\widetilde H$.

By universal approximation on compact sets, choose the pointwise block MLP so
that its $O$-channel output satisfies
\begin{equation}
\left| \Psi_O(\widetilde H_{ij}) - F(\widehat y_{ij}(f)) \right|
\le
\frac{\varepsilon_{\mathrm{nn}}}{2\sqrt N}
\end{equation}
uniformly over all admissible $\widetilde H_{ij}$, while all other MLP output
channels are identically zero. Since the $O$-channel of $\widetilde H$ is zero,
the residual update
\begin{equation}
H^{(1)}=\widetilde H+\mathrm{MLP}(\widetilde H)
\end{equation}
yields
\begin{equation}
|H^{(1)}_{ij,O}-(\mathcal T_\zeta f)_{ij}|
\le
\frac{\varepsilon_{\mathrm{nn}}}{\sqrt N}.
\end{equation}
Finally, choose the readout to select the $O$-channel:
\begin{equation}
w_{\mathrm{out}}=e_O,\qquad b_{\mathrm{out}}=0.
\end{equation}
Then, for every $f\in B_M$,
\begin{equation}
\|\mathcal N_\Theta(f,X)-\mathcal T_\zeta f\|_2^2
=
\sum_{i=1}^H\sum_{j=1}^W
|H^{(1)}_{ij,O}-(\mathcal T_\zeta f)_{ij}|^2
\le
N\cdot \frac{\varepsilon_{\mathrm{nn}}^2}{N}
=
\varepsilon_{\mathrm{nn}}^2.
\end{equation}
Thus
\begin{equation}
\sup_{f\in B_M}\|\mathcal N_\Theta(f,X)-\mathcal T_\zeta f\|_2
\le
\varepsilon_{\mathrm{nn}}.
\end{equation}
\end{proof}

\begin{lemma}[Lipschitz stability with respect to chart perturbations]
\label{applem2}
Let $\mathcal T_\zeta$ be as in Lemma~\ref{applemma}, and assume
in addition that the coefficient functions are bounded and Lipschitz:
\begin{equation}
\|a_r\|_\infty\le A_r,\quad
\|b_r\|_\infty\le B_r,\quad
\|c_s\|_\infty\le C_s,\quad
\|d_s\|_\infty\le D_s,\quad
\|\ell\|_\infty\le L_0,
\end{equation}
and
\begin{equation}
\operatorname{Lip}(a_r)\le L_{a_r},\quad
\operatorname{Lip}(b_r)\le L_{b_r},\quad
\operatorname{Lip}(c_s)\le L_{c_s},\quad
\operatorname{Lip}(d_s)\le L_{d_s},\quad
\operatorname{Lip}(\ell)\le L_\ell.
\end{equation}
Let another chart $\widehat\zeta_{ij}\in K$ satisfy
\begin{equation}
\max_{i,j}\|\widehat\zeta_{ij}-\zeta_{ij}\|\le \delta.
\end{equation}
Define $\mathcal T_{\widehat\zeta}$ by replacing $\zeta$ with $\widehat\zeta$
in the formula for $\mathcal T_\zeta$. Then, for every $f\in B_M$,
\begin{equation}
\|\mathcal T_{\widehat\zeta}f-\mathcal T_\zeta f\|_2
\le
C_{\mathrm{chart}}\delta\|f\|_2,
\end{equation}
where
\begin{equation}
C_{\mathrm{chart}}
=
\sum_{r=1}^{R_\xi}(L_{a_r}B_r+A_rL_{b_r})
+
\sum_{s=1}^{R_\eta}(L_{c_s}D_s+C_sL_{d_s})
+
L_\ell.
\end{equation}
In particular,
\begin{equation}
\sup_{f\in B_M}\|\mathcal T_{\widehat\zeta}f-\mathcal T_\zeta f\|_2
\le
C_{\mathrm{chart}}M\delta.
\end{equation}
\end{lemma}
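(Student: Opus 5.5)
The plan is a direct perturbation estimate. We split $\mathcal T_{\widehat\zeta}f-\mathcal T_\zeta f$ into its $R_\xi$ row terms, $R_\eta$ column terms and the pointwise term. We bound each piece in $\|\cdot\|_2$ by a constant times $\delta\|f\|_2$ and conclude with the triangle inequality. Write $m_r(i;f)$ and $n_s(j;f)$ as in the proof of Lemma~\ref{applemma}, and let $\widehat m_r,\widehat n_s$ denote the same averages with $\widehat\zeta$ in place of $\zeta$.

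For a fixed row index $r$, the product-rule splitting gives
\[
a_r(\widehat\zeta_{ij})\widehat m_r(i;f)-a_r(\zeta_{ij})m_r(i;f)
=\bigl(a_r(\widehat\zeta_{ij})-a_r(\zeta_{ij})\bigr)\widehat m_r(i;f)
+a_r(\zeta_{ij})\bigl(\widehat m_r(i;f)-m_r(i;f)\bigr).
\]
We bound each factor as follows.
\begin{itemize}
\item By the Lipschitz hypothesis, $|a_r(\widehat\zeta_{ij})-a_r(\zeta_{ij})|\le L_{a_r}\delta$.
\item Writing $f_{i\cdot}$ for the $i$-th row, Cauchy--Schwarz gives $|\widehat m_r(i;f)|\le \frac{B_r}{W}\sum_t|f_{it}|\le B_r\|f_{i\cdot}\|_2/\sqrt W$.
\item In the same way, $|\widehat m_r(i;f)-m_r(i;f)|\le \frac{1}{W}\sum_t L_{b_r}\delta|f_{it}|\le L_{b_r}\delta\|f_{i\cdot}\|_2/\sqrt W$.
\end{itemize}
Each of the two pieces is then at most $L_{a_r}B_r\delta\|f_{i\cdot}\|_2/\sqrt W$, respectively $A_rL_{b_r}\delta\|f_{i\cdot}\|_2/\sqrt W$, uniformly in $j$. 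Squaring and summing over the $W$ values of $j$ cancels the factor $1/W$. Summing over $i$ then gives $\sum_i\|f_{i\cdot}\|_2^2=\|f\|_2^2$. Hence the $\ell^2$ norm of the $r$-th row difference field is at most $(L_{a_r}B_r+A_rL_{b_r})\delta\|f\|_2$, where we use Minkowski on the two pieces.

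The column terms are handled identically with the roles of $i,j$ and of $H,W$ swapped, using the columns $f_{\cdot j}$. This yields $(L_{c_s}D_s+C_sL_{d_s})\delta\|f\|_2$ for each $s$. The pointwise term is immediate: $|(\ell(\widehat\zeta_{ij})-\ell(\zeta_{ij}))f_{ij}|\le L_\ell\delta|f_{ij}|$, giving $L_\ell\delta\|f\|_2$.

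Summing all contributions by the triangle inequality produces exactly $C_{\mathrm{chart}}\delta\|f\|_2$. The supremum statement follows from $\|f\|_2\le M$ on $B_M$.

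The only delicate point is the normalization step. We must bound the averages $m_r$ through row (respectively column) $\ell^2$ norms via Cauchy--Schwarz, so that the $1/\sqrt W$ factor exactly compensates the $W$ repetitions across $j$. A cruder bound such as $|m_r|\le B_r\|f\|_2$ would introduce a spurious $\sqrt{W}$ or $\sqrt N$ factor and fail to give the dimension-free constant $C_{\mathrm{chart}}$.
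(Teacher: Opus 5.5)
Your proposal is correct and follows the paper's proof essentially step for step. It uses the same split into row, column and pointwise pieces, and the same product-rule decomposition with $\widehat m_r$ carrying the $B_r$ bound. It also uses the same row-wise Cauchy--Schwarz estimate producing $\|f_{i,:}\|_2/\sqrt W$, followed by summation over $j$ and then $i$ and the triangle inequality.

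The only cosmetic difference is that the paper adds the two pieces pointwise before squaring, whereas you apply Minkowski afterwards. Both give the identical constant $C_{\mathrm{chart}}$.
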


\begin{proof}
For each $r=1,\dots,R_\xi$, define
\begin{equation}
(T_r^\zeta f)_{ij}
=
a_r(\zeta_{ij})
\Big(\frac1W\sum_{t=1}^W b_r(\zeta_{it})f_{it}\Big),
\qquad
(T_r^{\widehat\zeta} f)_{ij}
=
a_r(\widehat\zeta_{ij})
\Big(\frac1W\sum_{t=1}^W b_r(\widehat\zeta_{it})f_{it}\Big).
\end{equation}
Also define
\begin{equation}
m_r(i;f):=\frac1W\sum_{t=1}^W b_r(\zeta_{it})f_{it},
\qquad
\widehat m_r(i;f):=\frac1W\sum_{t=1}^W b_r(\widehat\zeta_{it})f_{it}.
\end{equation}
Then
\begin{equation}
(T_r^{\widehat\zeta}f-T_r^\zeta f)_{ij}
=
\big(a_r(\widehat\zeta_{ij})-a_r(\zeta_{ij})\big)\widehat m_r(i;f)
+
a_r(\zeta_{ij})\big(\widehat m_r(i;f)-m_r(i;f)\big).
\end{equation}
Since $a_r$ is Lipschitz and $b_r$ is bounded,
\begin{equation}
|a_r(\widehat\zeta_{ij})-a_r(\zeta_{ij})|
\le
L_{a_r}\delta,
\qquad
|\widehat m_r(i;f)|
\le
\frac{B_r}{W}\sum_{t=1}^W |f_{it}|
\le
\frac{B_r}{\sqrt W}\|f_{i,:}\|_2.
\end{equation}
Since $b_r$ is Lipschitz,
\begin{equation}
|b_r(\widehat\zeta_{it})-b_r(\zeta_{it})|
\le
L_{b_r}\delta,
\end{equation}
and therefore
\begin{equation}
\begin{aligned}
|\widehat m_r(i;f)-m_r(i;f)|
&=
\left|
\frac1W\sum_{t=1}^W
\big(b_r(\widehat\zeta_{it})-b_r(\zeta_{it})\big)f_{it}
\right| \\
&\le
\frac{L_{b_r}\delta}{W}\sum_{t=1}^W |f_{it}| \\
&\le
\frac{L_{b_r}\delta}{\sqrt W}\|f_{i,:}\|_2.
\end{aligned}
\end{equation}
Hence
\begin{equation}
|(T_r^{\widehat\zeta}f-T_r^\zeta f)_{ij}|
\le
\frac{\delta}{\sqrt W}(L_{a_r}B_r+A_rL_{b_r})\|f_{i,:}\|_2.
\end{equation}
Squaring and summing over $j$ and then $i$ gives
\begin{equation}
\|T_r^{\widehat\zeta}f-T_r^\zeta f\|_2
\le
\delta(L_{a_r}B_r+A_rL_{b_r})\|f\|_2.
\end{equation}

Similarly, for each $s=1,\dots,R_\eta$, define
\begin{equation}
(S_s^\zeta f)_{ij}
=
c_s(\zeta_{ij})
\Big(\frac1H\sum_{p=1}^H d_s(\zeta_{pj})f_{pj}\Big),
\qquad
(S_s^{\widehat\zeta} f)_{ij}
=
c_s(\widehat\zeta_{ij})
\Big(\frac1H\sum_{p=1}^H d_s(\widehat\zeta_{pj})f_{pj}\Big).
\end{equation}
Repeating the same argument along columns yields
\begin{equation}
\|S_s^{\widehat\zeta}f-S_s^\zeta f\|_2
\le
\delta(L_{c_s}D_s+C_sL_{d_s})\|f\|_2.
\end{equation}

For the local term, define
\begin{equation}
(L^\zeta f)_{ij}:=\ell(\zeta_{ij})f_{ij},
\qquad
(L^{\widehat\zeta}f)_{ij}:=\ell(\widehat\zeta_{ij})f_{ij}.
\end{equation}
Then
\begin{equation}
|(L^{\widehat\zeta}f-L^\zeta f)_{ij}|
=
|\ell(\widehat\zeta_{ij})-\ell(\zeta_{ij})|\,|f_{ij}|
\le
L_\ell\delta |f_{ij}|,
\end{equation}
and thus
\begin{equation}
\|L^{\widehat\zeta}f-L^\zeta f\|_2
\le
L_\ell\delta\|f\|_2.
\end{equation}

Since
\begin{equation}
\mathcal T_\zeta
=
\sum_{r=1}^{R_\xi} T_r^\zeta
+
\sum_{s=1}^{R_\eta} S_s^\zeta
+
L^\zeta,
\qquad
\mathcal T_{\widehat\zeta}
=
\sum_{r=1}^{R_\xi} T_r^{\widehat\zeta}
+
\sum_{s=1}^{R_\eta} S_s^{\widehat\zeta}
+
L^{\widehat\zeta},
\end{equation}
the triangle inequality gives
\begin{equation}
\|\mathcal T_{\widehat\zeta}f-\mathcal T_\zeta f\|_2
\le
\sum_{r=1}^{R_\xi}\|T_r^{\widehat\zeta}f-T_r^\zeta f\|_2
+
\sum_{s=1}^{R_\eta}\|S_s^{\widehat\zeta}f-S_s^\zeta f\|_2
+
\|L^{\widehat\zeta}f-L^\zeta f\|_2.
\end{equation}
Using the bounds above yields
\begin{equation}
\|\mathcal T_{\widehat\zeta}f-\mathcal T_\zeta f\|_2
\le
C_{\mathrm{chart}}\delta\|f\|_2.
\end{equation}
If $f\in B_M$, then $\|f\|_2\le M$, so
\begin{equation}
\|\mathcal T_{\widehat\zeta}f-\mathcal T_\zeta f\|_2
\le
C_{\mathrm{chart}}M\delta.
\end{equation}
Taking the supremum over $f\in B_M$ proves the last claim.
\end{proof}

\begin{theorem}[Approximation of charted axial low-rank operators by one-block CATO]
Let $\widetilde{\mathcal G}_\Phi:B_M\to\mathbb{R}^{H\times W}$
be $(R_\xi,R_\eta,\varepsilon_{\mathrm{rk}})$-charted axial low-rank as defines in Definition~\ref{def}. Then for every $\varepsilon_{\mathrm{nn}}>0$, there
exists a hidden width $C$ and parameters of a one-block core CATO with $R_\xi$
row heads and $R_\eta$ column heads such that
\[
\sup_{f\in B_M}
\|\mathcal N_\Theta(f,X)-\widetilde{\mathcal G}_\Phi f\|_2
\le
\varepsilon_{\mathrm{rk}}M+\varepsilon_{\mathrm{nn}}.
\]
Moreover, if the hypotheses of Lemma~\ref{applem2} hold and
\[
\max_{i,j}\|\widehat\zeta_{ij}-\zeta_{ij}\|\le \delta,
\]
then one can choose a one-block core CATO of the same axial size such that
\[
\sup_{f\in B_M}
\|\mathcal N_\Theta(f,X)-\widetilde{\mathcal G}_\Phi f\|_2
\le
\varepsilon_{\mathrm{rk}}M
+
C_{\mathrm{chart}}M\delta
+
\varepsilon_{\mathrm{nn}}.
\]
\end{theorem}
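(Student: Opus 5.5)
The first bound follows from the decomposition $\widetilde{\mathcal G}_\Phi=\mathcal T_\zeta+\mathcal R$ in Definition~\ref{def}, Lemma~\ref{applemma}, and the triangle inequality. Fix $\varepsilon_{\mathrm{nn}}>0$. Lemma~\ref{applemma} supplies a hidden width $C$ and parameters $\Theta$ of a one-block core CATO with $R_\xi$ row heads and $R_\eta$ column heads such that $\sup_{f\in B_M}\|\mathcal N_\Theta(f,X)-\mathcal T_\zeta f\|_2\le\varepsilon_{\mathrm{nn}}$. For any $f\in B_M$ we then have
\[
\|\mathcal N_\Theta(f,X)-\widetilde{\mathcal G}_\Phi f\|_2\le\|\mathcal N_\Theta(f,X)-\mathcal T_\zeta f\|_2+\|\mathcal R f\|_2\le\varepsilon_{\mathrm{nn}}+\varepsilon_{\mathrm{rk}}\|f\|_2\le\varepsilon_{\mathrm{nn}}+\varepsilon_{\mathrm{rk}}M .
\]
Taking the supremum over $B_M$ gives the first claim.

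For the second claim, I read the setting as follows: the network has access only to the perturbed chart $\widehat\zeta$, and the target is still $\widetilde{\mathcal G}_\Phi$, whose structure is defined with respect to the ideal chart $\zeta$. I will insert two intermediate operators and write
\[
\mathcal N_\Theta f-\widetilde{\mathcal G}_\Phi f=(\mathcal N_\Theta f-\mathcal T_{\widehat\zeta}f)+(\mathcal T_{\widehat\zeta}f-\mathcal T_\zeta f)-\mathcal R f .
\]
The operator $\mathcal T_{\widehat\zeta}$ has the same form as $\mathcal T_\zeta$, with the same continuous coefficients $a_r,b_r,c_s,d_s,\ell$, evaluated at chart points $\widehat\zeta_{ij}\in K$. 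So Lemma~\ref{applemma} applies to it verbatim with the chart $\widehat\zeta$, and gives a one-block core CATO with the same numbers of row and column heads satisfying $\|\mathcal N_\Theta f-\mathcal T_{\widehat\zeta}f\|_2\le\varepsilon_{\mathrm{nn}}$ uniformly on $B_M$. Lemma~\ref{applem2} bounds the middle term by $C_{\mathrm{chart}}M\delta$. The last term is at most $\varepsilon_{\mathrm{rk}}M$. The triangle inequality then gives the stated bound $\varepsilon_{\mathrm{rk}}M+C_{\mathrm{chart}}M\delta+\varepsilon_{\mathrm{nn}}$.

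The only delicate point is justifying that Lemma~\ref{applemma} can be invoked with the perturbed chart. Its proof uses three things: the chart values lie in a set on which the coefficient functions are continuous and bounded; these values enter only through a pointwise continuous map of the mesh coordinates $x_{ij}$; and the input is bounded. The first holds because $\widehat\zeta_{ij}\in K$. The second needs one remark: because the mesh is a finite set, one can either take $\widehat\zeta=\widehat\Phi(x_{ij})$ for a continuous learned chart $\widehat\Phi$, or simply treat the finitely many nodal values as being interpolated by a continuous map on a compact set. The pre-lifting MLP approximation step then goes through unchanged. I would state this explicitly and note that $C_{\mathrm{chart}}$ depends only on the coefficient functions, so the head counts and the form of the bound are unaffected by the perturbation.
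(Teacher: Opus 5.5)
Your proof is correct and follows the paper's argument step for step. The first bound comes from Lemma~\ref{applemma} applied to \(\mathcal T_\zeta\), the residual bound and the triangle inequality; the second comes from the same three-term split into \(\mathcal N_\Theta f-\mathcal T_{\widehat\zeta}f\), \(\mathcal T_{\widehat\zeta}f-\mathcal T_\zeta f\) and \(\mathcal R f\), with Lemma~\ref{applemma} applied to \(\mathcal T_{\widehat\zeta}\) and Lemma~\ref{applem2} bounding the middle term. You also note that applying Lemma~\ref{applemma} with \(\widehat\zeta\) needs the nodal values to come from a continuous function of the (distinct) mesh coordinates; the paper skips this point, and making it explicit is a worthwhile addition.
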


\begin{proof}
By Definition~1,
\begin{equation}
\widetilde{\mathcal G}_\Phi=\mathcal T_\zeta+\mathcal R,
\qquad
\|\mathcal R f\|_2\le \varepsilon_{\mathrm{rk}}\|f\|_2
\quad
\text{for all }f\in B_M.
\end{equation}

For the first claim, Lemma~\ref{applemma} implies that for every
$\varepsilon_{\mathrm{nn}}>0$ there exists a hidden width $C$ and parameters of
a one-block core CATO such that
\begin{equation}
\sup_{f\in B_M}\|\mathcal N_\Theta(f,X)-\mathcal T_\zeta f\|_2
\le
\varepsilon_{\mathrm{nn}}.
\end{equation}
Therefore, for every $f\in B_M$,
\begin{equation}
\begin{aligned}
\|\mathcal N_\Theta(f,X)-\widetilde{\mathcal G}_\Phi f\|_2
&\le
\|\mathcal N_\Theta(f,X)-\mathcal T_\zeta f\|_2
+
\|\mathcal R f\|_2 \\
&\le
\varepsilon_{\mathrm{nn}}
+
\varepsilon_{\mathrm{rk}}\|f\|_2 \\
&\le
\varepsilon_{\mathrm{nn}}+\varepsilon_{\mathrm{rk}}M.
\end{aligned}
\end{equation}
Taking the supremum over $f\in B_M$ gives
\begin{equation}
\sup_{f\in B_M}
\|\mathcal N_\Theta(f,X)-\widetilde{\mathcal G}_\Phi f\|_2
\le
\varepsilon_{\mathrm{rk}}M+\varepsilon_{\mathrm{nn}}.
\end{equation}

For the second claim, let $\mathcal T_{\widehat\zeta}$ be obtained from
$\mathcal T_\zeta$ by replacing $\zeta$ with $\widehat\zeta$. Applying
Lemma~\ref{applemma} to $\mathcal T_{\widehat\zeta}$ yields a
one-block core CATO such that
\begin{equation}
\sup_{f\in B_M}\|\mathcal N_\Theta(f,X)-\mathcal T_{\widehat\zeta}f\|_2
\le
\varepsilon_{\mathrm{nn}}.
\end{equation}
Then for every $f\in B_M$,
\begin{equation}
\begin{aligned}
\|\mathcal N_\Theta(f,X)-\widetilde{\mathcal G}_\Phi f\|_2
&\le
\|\mathcal N_\Theta(f,X)-\mathcal T_{\widehat\zeta}f\|_2
+
\|\mathcal T_{\widehat\zeta}f-\mathcal T_\zeta f\|_2
+
\|\mathcal R f\|_2.
\end{aligned}
\end{equation}
By Lemma~\ref{applem2},
\begin{equation}
\|\mathcal T_{\widehat\zeta}f-\mathcal T_\zeta f\|_2
\le
C_{\mathrm{chart}}\delta\|f\|_2
\le
C_{\mathrm{chart}}M\delta,
\end{equation}
and by Definition~1,
\begin{equation}
\|\mathcal R f\|_2\le \varepsilon_{\mathrm{rk}}\|f\|_2\le \varepsilon_{\mathrm{rk}}M.
\end{equation}
Therefore
\[
\|\mathcal N_\Theta(f,X)-\widetilde{\mathcal G}_\Phi f\|_2
\le
\varepsilon_{\mathrm{nn}}+C_{\mathrm{chart}}M\delta+\varepsilon_{\mathrm{rk}}M.
\]
Taking the supremum over $f\in B_M$ yields
\[
\sup_{f\in B_M}
\|\mathcal N_\Theta(f,X)-\widetilde{\mathcal G}_\Phi f\|_2
\le
\varepsilon_{\mathrm{rk}}M
+
C_{\mathrm{chart}}M\delta
+
\varepsilon_{\mathrm{nn}}.
\]
This completes the proof.
\end{proof}

\section{Benchmarks Details}
\label{Benchmark}
In this section, we provide a summary of the dataset and the details of each dataset.  In Table \ref{datasummary}, we provide the details of different types of PDEs. Then we provide the formulation of different PDEs.

\paragraph{Plasticity} This benchmark evaluates a model's ability to predict the future deformation of a plastic material subjected to impact from an arbitrarily shaped die applied from above \cite{li2023fourier}. In each case, the input is the die geometry, discretized on a structured mesh and represented as a tensor of size $101 \times 31$. The target output is the deformation field at each mesh point over the next 20 time steps. This output is represented as a tensor of size $20 \times 101 \times 31 \times 4$, where the final dimension corresponds to deformation components in four directions. The dataset contains 900 samples with distinct die shapes for training and 80 additional samples for testing.

\paragraph{Airfoil} This benchmark focuses on predicting the Mach number field induced by different airfoil geometries, following \cite{li2023fourier}. Each airfoil shape is represented on a structured mesh of size $221 \times 51$, and the target output is the Mach number evaluated at every mesh point. All airfoil geometries are generated by deforming the baseline NACA-0012 profile provided by the National Advisory Committee for Aeronautics. In total, 1,000 airfoil designs are used for training, while an additional 200 samples are reserved for testing.

\paragraph{Pipe} This benchmark considers the prediction of the horizontal fluid velocity field from the geometry of a pipe, following \cite{li2023fourier}. For each sample, the pipe domain is represented using a structured mesh of size $129 \times 129$. The input is therefore a tensor of size $129 \times 129 \times 2$, where the last dimension stores the two-dimensional coordinates of each mesh point. The target output is the horizontal velocity value at every mesh location, represented as a tensor of size $129 \times 129 \times 1$. The dataset contains 1,000 pipe geometries for training and 200 additional geometries for testing, generated by varying the pipe centerline.

\paragraph{Navier-Stokes} This benchmark studies the prediction of incompressible viscous fluid dynamics on a unit torus, following \cite{li2020fourier}. The fluid is assumed to have constant density, with the viscosity fixed at $10^{-5}$. The velocity field is discretized on a regular grid of size $64 \times 64$. Given the flow observations from the previous 10 time steps, the task is to forecast the fluid evolution over the next 10 time steps. The dataset consists of 1,000 fluid trajectories with different initial conditions for training, together with 200 additional trajectories for testing.

\paragraph{Darcy} This benchmark evaluates the modeling of fluid flow through porous media, following \cite{li2020fourier}. The original simulation domain is discretized on a regular grid of size $421 \times 421$, which is downsampled to $85 \times 85$ for the main experiments. For each sample, the model takes the porous medium structure as input and predicts the corresponding pressure field over the grid. The dataset includes 1,000 training samples with varying medium structures, and an additional 200 for testing.

\paragraph{Elasticity} This benchmark investigates the prediction of internal stress fields in elastic materials from their underlying structural geometry, following \cite{li2023fourier}. Each material sample is represented by 972 discretized points. The model input is a tensor of size $972 \times 2$, where each row encodes the two-dimensional coordinates of a point. The target output is the corresponding stress value at each point, represented as a tensor of size $972 \times 1$. The dataset contains 1,000 material structures for training and 200 additional structures for testing.
\begin{table*}[t]
\centering
\caption{Benchmark datasets used in the experiments. Here, $N$ denotes the spatial resolution and $N_t$ denotes the temporal dimension.}
\label{tab_dataset_detail}
\setlength{\tabcolsep}{5.5pt}
\renewcommand{\arraystretch}{1.15}

\begin{tabular}{lllp{2.8cm}ccc}
\toprule
\textbf{Type} 
& \textbf{Benchmark} 
& \textbf{Geometry} 
& \textbf{Task: input $\rightarrow$ output} 
& \textbf{$N$} 
& \textbf{$N_t$} 
& \textbf{Train/Test} \\
\midrule

\multirow{2}{*}{Regular grid}
& Darcy 
& Grid 
& Diffusion coefficient $\rightarrow$ fluid pressure 
& $85 \times 85$ 
& -- 
& $1000/200$ \\

& NS 
& Grid 
& Past velocity $\rightarrow$ future velocity 
& $64 \times 64$ 
& $10$ 
& $1000/200$ \\

\midrule

\multirow{3}{*}{Structured mesh}
& Airfoil 
& Mesh 
& Mesh points $\rightarrow$ Mach number 
& $221 \times 51$ 
& -- 
& $1000/200$ \\

& Pipe 
& Mesh 
& Mesh points $\rightarrow$ fluid velocity 
& $129 \times 129$ 
& -- 
& $1000/200$ \\

& Plasticity 
& Mesh 
& Mesh points $\rightarrow$ mesh deformation 
& $101 \times 31$ 
& $20$ 
& $900/80$ \\

\midrule

Point cloud
& Elasticity 
& Cloud 
& Structure $\rightarrow$ inner stress 
& $972$ 
& -- 
& $1000/200$ \\

\bottomrule
\end{tabular}
\label{datasummary}
\end{table*}

\section{Implementation details}
\label{implmentation}
In this section, we provide an overview of the experiment setup, the hyperparameters of our method, the baselines, and the evaluation metrics.

\begin{table}[t]
    \centering
    \caption{Training configurations used by all baselines. Training settings follow previous work without extra tuning. For Darcy, an additional spatial gradient regularization term \(l_{\mathrm{gdl}}\) is adopted following ONO.}
    \label{tab:training_config}
    \small
    \renewcommand{\arraystretch}{1.2}
    \setlength{\tabcolsep}{5pt}

    \begin{tabular}{lcccccc}
        \toprule
        \textbf{Benchmark} 
        & \textbf{Loss} 
        & \textbf{Epochs} 
        & \textbf{LR} 
        & \textbf{Optimizer} 
        & \textbf{Batch} 
        & \textbf{Scheduler} \\
        \midrule
        Darcy 
        & \(l_2 + 0.1l_{\mathrm{gdl}}\) 
        & 500 
        & \(5 \times 10^{-4}\) 
        & AdamW 
        & 4 
        & OneCycleLR \\

        Navier--Stokes 
        & Rel. \(L^2\) 
        & 500 
        & \(5 \times 10^{-4}\) 
        & AdamW 
        & 2 
        & OneCycleLR \\

        Elasticity 
        & Rel. \(L^2\) 
        & 500 
        & \(10^{-3}\) 
        & AdamW 
        & 1 
        & OneCycleLR \\

        Plasticity 
        & Rel. \(L^2\) 
        & 500 
        & \(10^{-3}\) 
        & AdamW 
        & 8 
        & OneCycleLR \\

        Airfoil 
        & Rel. \(L^2\) 
        & 500 
        & \(10^{-3}\) 
        & AdamW 
        & 4 
        & OneCycleLR \\

        Pipe 
        & Rel. \(L^2\) 
        & 500 
        & \(10^{-3}\) 
        & AdamW 
        & 4 
        & OneCycleLR \\
        \bottomrule
    \end{tabular}
    \label{raining configurations}
\end{table}

\begin{table}[t]
    \centering
    \caption{Architecture configurations used by our method across benchmarks.}
    \label{tab:architecture_config}
    \small
    \renewcommand{\arraystretch}{1.2}
    \setlength{\tabcolsep}{6pt}

    \begin{tabular}{lcccccc}
        \toprule
        \textbf{Benchmark} 
        & \textbf{Layers} 
        & \textbf{Embed. Dim} 
        & \textbf{Heads} 
        & \textbf{Grad weight} 
        & \textbf{Flux weight} 
        & \textbf{Consist weight} 
        \\
        \midrule
        Darcy 
        & 8 & 96 & 8 & 0.2 & 0.2 & 0.05 \\

        Navier--Stokes 
        & 8 & 128 & 8 & 0 & 0 & 0 \\

        Elasticity 
        & 8 & 144 & 8 & 0 & 0 & 0 \\

        Plasticity 
        & 8 & 160 & 8 & 0 & 0 & 0 \\

        Airfoil 
        & 8 & 128 & 8 & 0.2 & 0.2 & 0.05 \\

        Pipe 
        & 8 & 96 & 8 & 0.2 & 0.2 & 0.05 \\
        \bottomrule
    \end{tabular}
    \label{architecture}
\end{table}

\subsection{Training Details}
Table \ref{tab_dataset_detail} provides a detailed summary of the data geometry, task, and numbers of training and testing samples. Table
\ref{raining configurations} provides the training configuration used for all baselines.   It summarizes the training configurations used for different methods across the benchmark datasets. To ensure a fair comparison, all baselines and benchmarks are trained under consistent settings, with our method using fewer or comparable parameters than transformer-based baselines. Across all datasets, training employs a relative $\ell_2$ loss. For the Darcy benchmark, following ONO \cite{xiao2023improved}, an additional spatial gradient regularization term $\ell_{\mathrm{gdl}}$ is included, yielding the objective
\begin{equation}
\ell_2 + 0.1\,\ell_{\mathrm{gdl}}.
\end{equation}
All models are trained for 500 epochs using the AdamW optimizer, with the learning rate scheduled using OneCycleLR.

\subsection{Hyperparameters and architecture details}
As shown in Table \ref{architecture}, we set the number of layers and heads to 8, consistent with Transolver and SAOT. In addition, we apply the physical loss to the Darcy, Airfoil, and Pipe models, which are time-independent PDEs. Grad weight is used to weight the gradient-matching loss between the approximated gradient and the true gradient. A larger value encourages the predicted solution to have more accurate spatial derivatives. Flux weight means the weight of the flux loss between the predicted flux and the true gradient. A larger value encourages the predicted flux field to directly match the target physical gradient. Consistent weight means the consistency loss between the predicted flux and the predicted gradient. A larger value encourages the predicted flux to be consistent with the predicted solution itself.

\subsection{Evaluation Metric}
\label{metric}

To evaluate predictive accuracy on standard partial differential equation (PDE) benchmarks, we adopt the mean relative $\ell_2$ error \cite{li2020fourier} as the primary performance measure. This metric is widely used for assessing the discrepancy between predicted and reference physical fields and is reported consistently across all experiments. Formally, the evaluation loss is defined as
\begin{equation}
\mathcal{L}
=
\frac{1}{N}
\sum_{i=1}^{N}
\frac{
\left\|
\mathcal{G}_{\theta}(\mathbf{a}_i)
-
\mathcal{G}^{\dagger}(\mathbf{a}_i)
\right\|_2
}{
\left\|
\mathcal{G}^{\dagger}(\mathbf{a}_i)
\right\|_2
},
\end{equation}
where $N$ denotes the number of test samples, $\mathcal{G}_{\theta}(\mathbf{a}_i)$ is the model prediction corresponding to the input $\mathbf{a}_i$, and $\mathcal{G}^{\dagger}(\mathbf{a}_i)$ represents the associated ground-truth solution. The normalization by $\left\|\mathcal{G}^{\dagger}(\mathbf{a}_i)\right\|_2$ accounts for differences in the magnitude and resolution scale of the target fields, thereby enabling a fair and comparable assessment across heterogeneous PDE benchmarks.
\subsection{CATO-PC}
\label{catopc}

For point-cloud inputs, the row--column factorization required by charted axial attention is unavailable. We therefore introduce CATO-PC, an irregular-mesh variant that retains the learned chart
\(\zeta_i=\Phi_{\mathrm{chart}}(x_i)\) but replaces structured axial attention with a combination of irregular physics attention from \cite{wu2024transolver} and local chart-conditioned message passing. Given an unordered point set \(X=\{x_i\}_{i=1}^N\), optional features \(F=\{f_i\}_{i=1}^N\), and chart coordinates \(\zeta_i\in[-1,1]^{d_\zeta}\), the input token is lifted as
\[
    h_i^{(0)}
    =
    \Phi_{\mathrm{pre}}
    \bigl([\rho(x_i),f_i,\zeta_i]\bigr)
    +
    \Phi_{\mathrm{cb}}(\zeta_i),
\]
where \(f_i\) is omitted when no auxiliary feature is provided. A \(K\)-nearest-neighbor graph is constructed in the physical coordinate space. For each edge \((i,j)\), we define
\[
    g_{ij}
    =
    [x_j-x_i,\|x_j-x_i\|_2,\zeta_j-\zeta_i],
\]
and compute local messages
\[
    m_{ij}
    =
    \sigma\!\left(
        W_c\bar h_i
        +
        W_\Delta(\bar h_j-\bar h_i)
        +
        \Phi_{\mathrm{geo}}(g_{ij})
    \right),
    \qquad
    \bar h_i=\operatorname{LN}(h_i).
\]
The local operator aggregates messages by both soft attention and max pooling:
\[
    \mathcal{L}_{\mathrm{pc}}(H,X,\zeta)_i
    =
    \Phi_{\mathrm{out}}
    \left(
    \left[
        \sum_{j\in\mathcal{N}_K(i)}
        \alpha_{ij}m_{ij},
        \;
        \max_{j\in\mathcal{N}_K(i)}m_{ij}
    \right]
    \right),
\]
where
\[
    \alpha_{ij}
    =
    \operatorname{softmax}_{j\in\mathcal{N}_K(i)}
    \left(
        \frac{w_s^\top m_{ij}}{\sqrt{C}}
    \right).
\]
Each block then updates the hidden state by
\[
    H^{(\ell,1)}
    =
    H^{(\ell)}
    +
    \gamma_{\mathrm{attn}}
    \odot
    \mathcal{A}_{\mathrm{irr}}
    \bigl(\operatorname{LN}(H^{(\ell)})\bigr),
\]
\[
    H^{(\ell,2)}
    =
    H^{(\ell,1)}
    +
    \gamma_{\mathrm{loc}}
    \odot
    \mathcal{L}_{\mathrm{pc}}
    \bigl(\operatorname{LN}(H^{(\ell,1)}),X,\zeta\bigr),
\]
\[
    H^{(\ell+1)}
    =
    H^{(\ell,2)}
    +
    \gamma_{\mathrm{mlp}}
    \odot
    \operatorname{MLP}
    \bigl(\operatorname{LN}(H^{(\ell,2)})\bigr).
\]
The final representation is mapped to the solution prediction
\(\hat u_i=\Phi_u(\bar h_i)\), and optionally to an auxiliary flux-like field
\(\hat q_i=\Phi_q(\bar h_i)\). In this way, CATO-PC preserves the learned chart mechanism of CATO while adding topology-aware local interactions suitable for irregular meshes and unordered point clouds.

\section{More visualization and ablation study}
\label{More abaltion}
In this section, we provide more ablation studies and visualization. 

\begin{figure}[t!]
    \centering
    \includegraphics[width=1\linewidth]{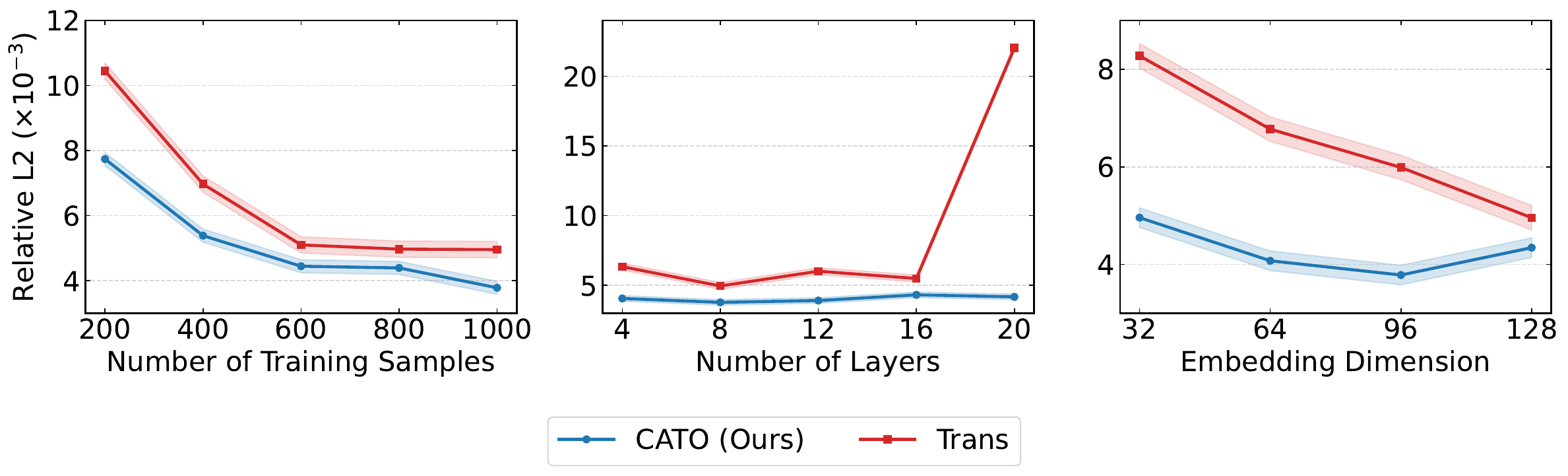}
    \caption{Model scaling performance on Pipe. We compare our method with Transolver across training sample size, layer count, and embedding dimension. }
    
\end{figure}

\begin{figure}[t!]
    \centering
    \includegraphics[width=1\linewidth]{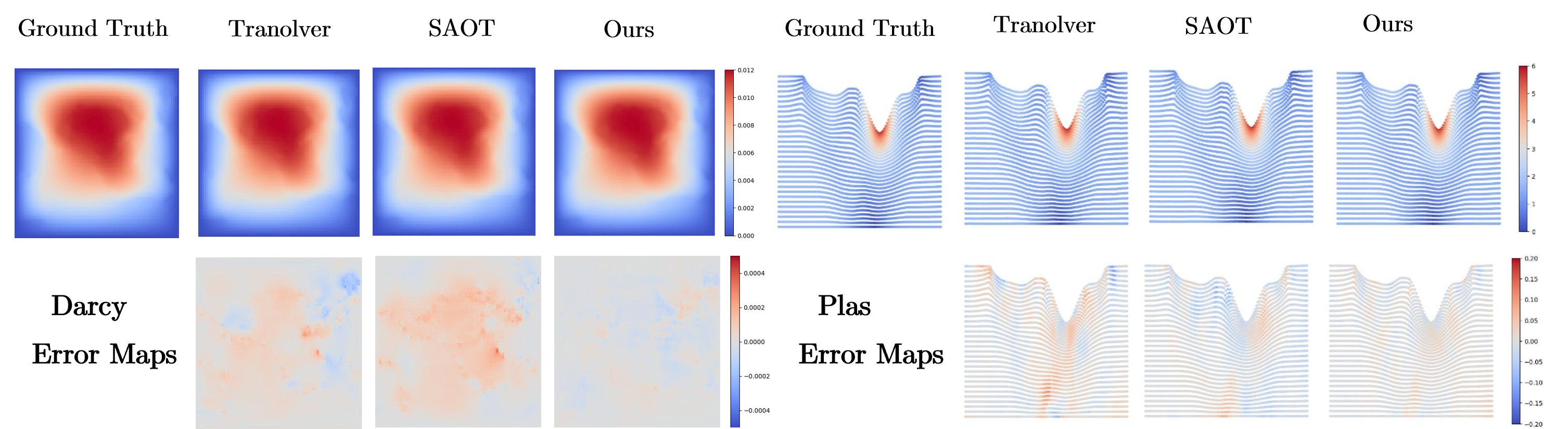}
    \caption{Visual comparison on Darcy and Plas benchmarks. The top row shows the ground truth and predictions from Transolver, SAOT, and our method. The bottom row presents the corresponding error maps for each prediction method. }

\end{figure}

\begin{figure}[t!]
    \centering
    \includegraphics[width=1\linewidth]{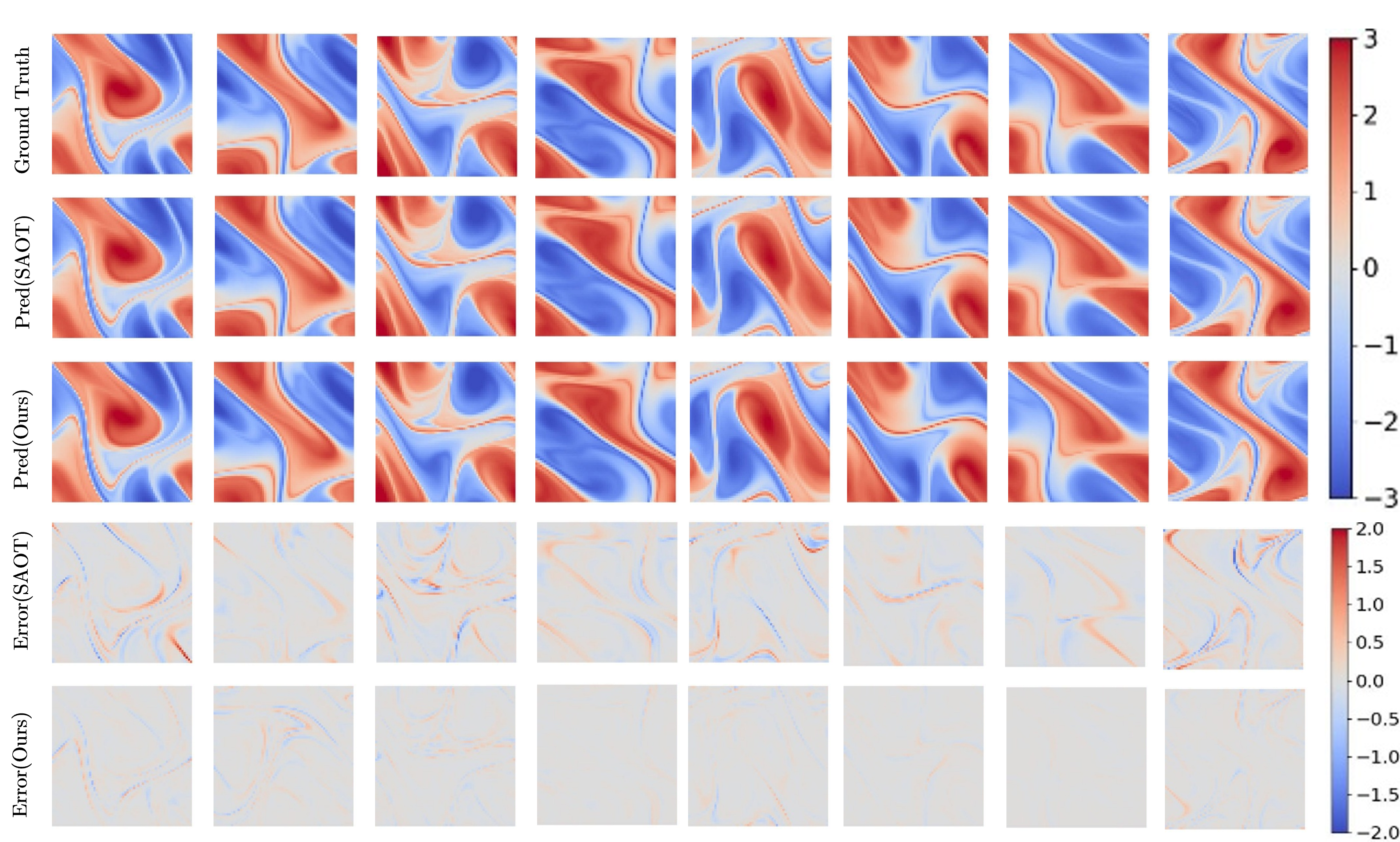}
    \caption{Teaser visualization on the Navier–Stokes benchmark.
Comparison of ground truth, SAOT prediction, CATO prediction, and their corresponding error maps across multiple test cases. CATO produces predictions closer to the ground truth and yields smaller, more localized errors than SAOT, indicating improved accuracy in capturing complex flow structures.}
\end{figure}

\section{Broad Impact} 
\label{broadimpact}
This work introduces CATO, a deep learning-based solver with broad applicability across scientific and engineering problems. Although CATO is not designed for social-domain applications such as large language models or image generation, its computational capabilities may benefit a wide range of real-world settings, including weather forecasting, biomedical imaging, industrial simulation, and engineering optimization. Its broader impact lies in enabling more efficient, scalable, and accurate computational modeling for applications with significant scientific, industrial, and societal relevance.


\clearpage
\clearpage
\newpage

\end{document}